\documentclass[11pt]{article}
\usepackage[left=1in,right=1in,top=0.8in,bottom=1in]{geometry}
\usepackage{amsmath}
\usepackage{amssymb}
\usepackage{amsthm}
\usepackage{graphicx}
\usepackage{float}
\usepackage{xcolor}
\usepackage{tikz-cd}
\usepackage{multicol}
\usepackage{enumitem}
\usepackage{natbib}

\newcommand{\bX}{\mathbf{X}}
\newcommand{\bA}{\mathbf{A}}
\newcommand{\bx}{\mathbf{x}}
\newcommand{\bB}{\mathbf{B}}
\newcommand{\bW}{\mathbf{W}}

\definecolor{editgreen}{RGB}{0,128,0}

\theoremstyle{definition}
\newtheorem{remark}{Remark}
\newtheorem{theorem}{Theorem}
\newtheorem{corollary}{Corollary}
\newcounter{algorithm}
\newcommand{\algorithmtitle}[2]{%
  \refstepcounter{algorithm}%
  \label{alg:#1}%
  \textbf{Algorithm \thealgorithm: #2}%
}
\newcommand{\algref}[1]{Algorithm~\ref{alg:#1}}
\newcommand{\includegraphicsorplaceholder}[2][]{%
  \IfFileExists{#2}{%
    \ifnum\pdffilesize{#2}>0
      \includegraphics[#1]{#2}%
    \else
      \fbox{\parbox[c][2in][c]{0.9\textwidth}{\centering Image file \texttt{#2} is empty.}}%
    \fi
  }{%
    \fbox{\parbox[c][2in][c]{0.9\textwidth}{\centering Image file \texttt{#2} was not found.}}%
  }%
}

\title{Learned Look-Ahead Splitting Rule for CART}
\author{
  \begin{minipage}[t]{0.33\textwidth}
    \centering
    Andrew Gao\\[0.3em]
    \scriptsize Thomas Jefferson High School\\ for Science and Technology,\\ Alexandria, VA 22312, USA
  \end{minipage}
  \hfill
  \begin{minipage}[t]{0.3\textwidth}
    \centering
    Tianlin Liu, Ruichen Han\\[0.3em]
    \scriptsize Lynbrook High School,\\ San Jose, CA 95129, USA
  \end{minipage}
  \hfill
  \begin{minipage}[t]{0.33\textwidth}
    \centering
    Lu Tian\thanks{Corresponding author.}\\[0.3em]
    \scriptsize Department of Biomedical Data Science,\\ Stanford University,\\ Stanford, CA 94305, USA
  \end{minipage}
}

\date{}

\begin{document}

\maketitle

\begin{abstract}
Classification and regression trees are typically constructed using a greedy splitting rule that maximizes the immediate reduction in prediction error at each node. Although this strategy is computationally efficient, it can miss splits that yield small short-term gains but create substantial downstream improvements after further partitioning. We propose a look-ahead tree-building method that evaluates each candidate split by the prediction error reduction achieved after growing a conventional CART subtree below that split. Because the full look-ahead procedure can be computationally expensive, we also describe a smart look-ahead algorithm that learns downstream split values using node-level features. The proposed framework preserves the interpretability of recursive partitioning while improving split selection in hierarchical or interaction-driven settings. We conduct a simulation study comparing conventional, full look-ahead, and smart look-ahead methods under several settings and apply the proposed methods to analyze two real data examples demonstrating the merit of the new methods.
\end{abstract}

\section{Introduction}
Classification and regression trees (CART), introduced by Breiman, Friedman, Olshen, and Stone, are widely used because they predict through sequences of simple, interpretable decision rules \citep{breiman1984}. Their standard construction is greedy: each split is selected by its immediate reduction in prediction error, without considering structure revealed deeper in the tree. A locally weak but necessary split may therefore be rejected even if it would support a substantially better subtree. Subsequent pruning can remove branches from a grown tree but cannot introduce a promising split that was never selected.

This limitation is particularly important in the presence of deep interactions, where a variable with a weak marginal effect may not be selected near the root even though it enables substantial improvements through later splits. Consider the XOR-type regression \(Y=I(X_1X_2>0)\) with independent, symmetric predictors. Splitting either predictor at zero produces no population-level immediate reduction in squared error because both child nodes have mean response \(1/2\). Yet splitting the resulting child nodes at zero on the other predictor perfectly separates the four response regions in the absence of noise.

This observation motivates evaluating a candidate split by the quality of the subtree constructed beneath it rather than solely by its immediate gain. The resulting look-ahead procedure preserves CART's recursive structure and interpretability while crediting early splits for improvements realized several levels later. Because constructing a complete downstream subtree for every candidate split can be computationally expensive, we also develop a learned approximation to the look-ahead value.

Prior decision-tree research has considered one-level look-ahead for classification, which can sometimes produce larger or less accurate trees \citep{murthy1995}; efficient information-gain-based look-ahead for numerical predictors \citep{elomaa2003}; and anytime ID3 algorithms using bounded-depth search or stochastic downstream construction \citep{esmeir2004}. Our full procedure follows the same broad principle of evaluating splits through their descendants but directly compares binary variable--cutoff pairs for a continuous outcome using the squared-error risk of downstream conventional CART subtrees. Our primary extension, smart look-ahead, learns this value from bootstrap-generated node features and replaces repeated online subtree construction with a reusable prediction model. Reinforcement learning trees similarly select variables according to potential future improvement, but require a suitable initial model of the association between $\bX$ and $Y$ to guide tree construction \citep{zhu2015}.

In summary, this paper makes three main contributions: 
\begin{enumerate}
\item we formulate a full look-ahead CART procedure for continuous outcomes in which each candidate split is evaluated by growing conventional CART subtrees below the forced split; 
\item we develop a computationally cheaper refinement that learns downstream split values from a prediction model trained on bootstrap CART trees;
\item we demonstrate that the full look-ahead CART procedure can achieve lower training error than the conventional procedure.
\end{enumerate}
We evaluate the new approach in an extensive simulation study that compares conventional, full look-ahead, and smart look-ahead CART in terms of predictive performance and computational cost.

\subsection{Review of Regular CART}

CART is a tree-based statistical learning method for predicting an outcome \(Y\) from covariates $\bX$ \citep{breiman1984}. Specifically, it is a binary decision tree that recursively partitions the covariate space into regions that are increasingly homogeneous with respect to the outcome. Each region, or leaf, is defined by the intersection of a set of binary decision rules, and the predicted outcome is constant within each leaf. Thus, CART defines a piecewise constant function \(f_{\mathcal T}(\bx): \Omega_{\bX} \to \Omega_Y\), where the subscript \(\mathcal T\) denotes the particular tree and \(\Omega_{\bX}\) and \(\Omega_Y\) are the supports of \(\bX\) and \(Y\), respectively. We focus on a continuous outcome $Y$.

CARTs are commonly grown using a simple greedy, recursive splitting rule. In the regression setting, suppose the training data consist of \(n\) independent and identically distributed (i.i.d.) observations
\[
\{(Y_i,\bX_i): i=1,\ldots,n\}.
\]
At any leaf (node) \(\bA\) of the current tree, the conventional CART algorithm considers a collection of candidate binary splits \(s \in \mathcal{S}(\bA)\). Each split partitions \(\bA\) into two child nodes, denoted by \(\bA_L(s)\) and \(\bA_R(s)\). The usual splitting criterion chooses the split that maximizes the immediate reduction in total squared prediction error,
\begin{align*}
\Delta_{\mathrm{GD}}(\bA, s)
&= \sum_{\bX_i\in \bA}(Y_i-\widehat{\mu}_{\bA})^2-\sum_{\bX_i\in \bA_L(s)}(Y_i-\widehat{\mu}_{\bA_L(s)})^2-\sum_{\bX_i\in \bA_R(s)}(Y_i-\widehat{\mu}_{\bA_R(s)})^2\\
&= R(\bA) - R\{\bA_L(s)\} - R\{\bA_R(s)\},
\end{align*}
where $\widehat{\mu}_{\bB}$ is the sample mean of all outcomes for observations in node $\bB$. This regression tree is then grown recursively until a stopping criterion is met, such as a maximum tree depth, a minimum number of observations in terminal leaves, or a minimum required improvement in total squared prediction error. The procedure is summarized in Algorithm \ref{alg:convention}. 

\algorithmtitle{regular-cart}{Regular CART for a continuous outcome}\label{alg:convention}
\begin{enumerate}[label=\arabic*.]
    \item Start with the root node containing all training observations.
    \item At a current leaf $\bA$, check whether a stopping condition has been satisfied; if so, declare $\bA$ terminal. Otherwise, form the set of candidate binary splits $\mathcal{S}(\bA)$.
    \item For each candidate split $s\in\mathcal{S}(\bA)$, compute the greedy improvement $\Delta_{\mathrm{GD}}(\bA, s)$.
    \item Select the leaf $\widehat{\bA}^{\mathrm{GD}}$ and split $\widehat{s}^{\mathrm{GD}} \in \mathcal{S}(\widehat{\bA}^{\mathrm{GD}})$ as
    \[
    \left(\widehat{\bA}^{\mathrm{GD}}, \widehat{s}^{\mathrm{GD}}\right)=\arg\max_{\bA,\,s\in\mathcal{S}(\bA)}\Delta_{\mathrm{GD}}(\bA, s).
    \]
    \item Split $\widehat{\bA}^{\mathrm{GD}}$ using $\widehat{s}^{\mathrm{GD}}$ and repeat steps 2--5 until all leaves are terminal.
\end{enumerate}

After stopping, this procedure yields a large tree ${\cal T}_F,$ which may overfit the data. A pruning procedure may then be conducted to simplify the tree and improve out-of-sample prediction \citep{breiman1984}. This overfit-then-prune strategy can outperform a tree that is stopped early using only the greedy splitting rule, because pruning evaluates candidate subtrees after the larger tree structure has already been explored. 

In general, pruning generates a nested sequence of subtrees by repeatedly removing the branch rooted at an internal node according to a selected criterion. Specifically, for a current tree ${\cal T}_k$, consider every internal node $\mathbf{B}$ whose branch contains more than one terminal leaf. Let ${\cal T}_{k,\mathbf{B}}$ be the subtree of ${\cal T}_k$ rooted at $\mathbf{B}$. The average increase in empirical risk per terminal leaf removed is
\[
\alpha(\mathbf{B})
=
\frac{R(\mathbf{B})-R({\cal T}_{k,\mathbf{B}})}{|\mathcal{L}({\cal T}_{k,\mathbf{B}})|-1},
\]
where 
\[
R({\cal T})
=
\sum_{\ell\in \mathcal{L}({\cal T})}
\sum_{\bX_i\in \ell}(Y_i-\mu_\ell)^2
\] 
is the empirical risk of tree ${\cal T}$, $\mathcal{L}({\cal T})$ is its set of terminal leaves, and $|\mathcal{L}({\cal T})|$ is the number of terminal leaves. The branch with the smallest value of $\alpha(\mathbf{B})$ is pruned. If several branches attain the same minimum, they may be pruned simultaneously, or one branch may be selected according to a prespecified tie-breaking rule. Repeating this operation gives a nested sequence of trees starting from ${\cal T}_F$:
\begin{equation}
{\cal T}_F={\cal T}_0 \supset {\cal T}_1 \supset \cdots \supset {\cal T}_K,
\label{eq:prune-tree-seq}
\end{equation}
from which the final CART can be selected. For example, one may choose a final tree ${\cal T}_{\widehat{k}}$ with the smallest $\mathrm{BIC}({\cal T}_k)$ \citep{schwarz1978}, where the BIC for tree ${\cal T}_k$ under a Gaussian model is
\[
\mathrm{BIC}({\cal T}_k)
= n\log\left\{\frac{R({\cal T}_k)}{n}\right\}
+ \deg({\cal T}_k)\log n,
\]
where $\deg({\cal T})$ is the degree of freedom of tree ${\cal T}$.  In this paper, we set it to be the total number of nodes.

\begin{remark}
At additional computational cost, cross-validation can also be used to select the final tree from sequence (\ref{eq:prune-tree-seq}) \citep{stone1974}. Tree ${\cal T}_k$ is associated with a critical value $\alpha_k$, the value of $\alpha(\mathbf{B}_k)$ for the branch pruned from ${\cal T}_{k-1}$ to obtain ${\cal T}_k$. In $V$-fold cross-validation, the data are randomly partitioned into $V$ approximately equal-sized folds, ${\cal I}_1,\ldots,{\cal I}_V$. For each fold $v$, grow a large tree ${\cal T}_{F}^{(-v)}$ using all observations except those in fold $v$, and construct its pruning path
\[
{\cal T}^{(-v)}_F={\cal T}_0^{(-v)} \supset {\cal T}_1^{(-v)} \supset \cdots \supset {\cal T}_{K_v}^{(-v)},
\]
indexed by critical values
\[
0=\alpha_0^{(-v)} \le \alpha_1^{(-v)} \le \cdots \le \alpha_{K_v}^{(-v)}.
\]
The held-out mean squared prediction error of ${\cal T}_k^{(-v)}$, denoted by $\widehat{CV}_v\left(\alpha_k^{(-v)}\right)$, is calculated using observations from ${\cal I}_v$. Define the piecewise constant risk function $\widehat{CV}_v(\alpha)=\widehat{CV}_v\left(\alpha_k^{(-v)}\right)$ for any $\alpha \in [\widehat{\alpha}_k^{(-v)},\widehat{\alpha}_{k+1}^{(-v)})$. The cross-validated risk estimate is
\[
\widehat{\mathrm{CV}}(\alpha)
=
\frac{1}{V}
\sum_{v=1}^V
\widehat{CV}_v(\alpha).
\]
One then sets
\[
\widehat\alpha=\arg\min_{\alpha} \widehat{\mathrm{CV}}(\alpha),
\]
and choose the subtree corresponding to $\widehat\alpha$ as the final CART.
\end{remark}

\section{Look-Ahead Method} 

Although the greedy strategy is simple and computationally efficient, it tends to be ``nearsighted.'' A split may produce only a small immediate reduction in squared prediction error yet create child nodes whose internal structure allows substantial downstream improvements after additional splits. Conversely, a split with a large immediate improvement may lead to child nodes that cannot be refined effectively. The conventional CART rule can therefore miss splits that are valuable primarily because they enable better future partitioning of the covariate space. This limitation motivates a look-ahead approach that evaluates a split not only by its immediate gain but also by the predictive improvement it enables downstream. Specifically, we propose a look-ahead splitting method that evaluates each candidate split by the improvement achieved by the full subtree grown after that split. The look-ahead procedure retains Steps 1, 2, and 5 of Algorithm \ref{alg:convention} and replaces only Steps 3--4 with the look-ahead evaluation and selection rules below.

% \textcolor{red} {We define the depth of a node as the number of edges from the root, so the root node has depth \(0\), its children have depth \(1\), and so forth. For a candidate split at a current node, the look-ahead depth is the number of additional conventional CART splitting levels grown below the two child nodes created by the forced candidate split. Thus, look-ahead depth \(0\) evaluates only the immediate improvement from the candidate split and is equivalent to the greedy conventional CART splitting criterion, whereas look-ahead depth \(1\) permits one additional splitting level below the child nodes.}

\algorithmtitle{look-ahead-cart}{Look-ahead CART for a continuous outcome}
\begin{enumerate}[label=\arabic*.]
    \item Start with the root node containing all training observations.
    \item At a current leaf \(\bA\), check whether a stopping condition has been satisfied; if so, declare \(\bA\) terminal. Otherwise, form the set of candidate binary splits \(\mathcal{S}(\bA)\).
    \item For each candidate split \(s\in\mathcal{S}(\bA)\):
    \begin{enumerate}[label*=\arabic*.]
        \item Force the split of \(\bA\) into two child nodes.
        \item Grow conventional CART subtrees on the left and right child nodes separately. 
        \item Let \(\mathcal{T}_{\bA,s}\) denote the resulting subtree rooted at \(\bA\). Compute the reduction in total squared prediction error as the look-ahead value:
        \[
        \Delta_{\mathrm{LA}}(\bA,s)
        =R(\bA)-R(\mathcal T_{\bA,s}).
        \]
    \end{enumerate}
    \item Select the leaf \(\widehat{\bA}^{\mathrm{LA}}\) and split \(\widehat{s}^{\mathrm{LA}}\in \mathcal{S}(\widehat{\bA}^{\mathrm{LA}})\) as
    \[
    (\widehat{\bA}^{\mathrm{LA}}, \widehat{s}^{\mathrm{LA}})=\arg\max_{\bA, s\in\mathcal{S}(\bA)}\Delta_{\mathrm{LA}}(\bA, s).
    \]
    \item Split \(\widehat{\bA}^{\mathrm{LA}}\) using \(\widehat{s}^{\mathrm{LA}}\) and repeat Steps 2--5 until all leaves are terminal.
\end{enumerate}

The look-ahead method retains the nested structure and interpretability of CART but replaces the one-step splitting criterion with the prediction error improvement attainable by the downstream subtree induced by each candidate split. The main advantage of this approach is that it directly targets the long-run value of a split within the tree-building process. It gives higher priority to splits that create useful future partitioning opportunities, even when their immediate reduction in prediction error is modest. This can be especially beneficial when the signal is hierarchical or interaction-driven.

The main limitation of the look-ahead method is its computational burden. At each node, the method requires growing two conventional CARTs for every candidate split. When the number of predictors, cutpoints, or observations is large, this nested tree-building procedure can be substantially more expensive than the standard greedy algorithm. Several practical strategies can reduce the computational cost: (1) adopting stopping rules that terminate the growth of ${\cal T}_{\bA,s}$ early, such as restricting its maximum depth, and (2) limiting the number of candidate splits per predictor.

\section{Smart Look-Ahead Method} 

The look-ahead method in the previous section can be computationally expensive. An appealing solution is to estimate the downstream improvement associated with a candidate split $(\bA,s)$ without explicitly growing a CART subtree after the split. Specifically, one can predict $\Delta_{\mathrm{LA}}(\bA,s)$ using features derived from $(\bA,s)$. This idea motivates the smart look-ahead method.

Let $\bB$ be a node, and let ${\cal T}_{\bB}$ denote the subtree rooted at $\bB$ and generated using the conventional greedy method.
Suppose that we can train a learner $\widehat{G}(\cdot)$ to predict the proportion of variation explained by this subtree:
\[\widehat{G}(\bW_{\bB}) \approx 1-\frac{R(\mathcal{T}_{\bB})}{R(\bB)},\]
where $\bW_{\bB}$ is a feature vector constructed from the observations $\left\{(\bX_i,Y_i)\mid \bX_i\in \bB\right\}$.
For a leaf $\bA$ and a split $s\in {\cal S}(\bA)$, let $\bA_{L(s)}$ and $\bA_{R(s)}$ denote the resulting left and right child nodes, respectively. We can then approximate $\Delta_{\mathrm{LA}}(\bA,s)$ by
\begin{equation}  \widehat{\Delta}_{\mathrm{LA}}(\bA, s)=
        R(\bA)-\left\{1-\widehat{G}(\bW_{\bA_{L(s)}})\right\}R(\bA_{L(s)})-\left\{1-\widehat{G}(\bW_{\bA_{R(s)}})\right\}R(\bA_{R(s)}).
\label{eq:approxdelta} 
\end{equation}
Therefore, Steps 3--4 of \algref{look-ahead-cart} may be replaced by the following steps:

\begin{itemize}
  \item[3.] Calculate $\widehat{\Delta}_{\mathrm{LA}}(\bA,s)$ using (\ref{eq:approxdelta}).
  \item[4.]   Select the leaf \(\widehat{\bA}^{\mathrm{SLA}}\) and split \(\widehat{s}^{\mathrm{SLA}}\in \mathcal{S}(\widehat{\bA}^{\mathrm{SLA}})\) as
    \[
    (\widehat{\bA}^{\mathrm{SLA}}, \widehat{s}^{\mathrm{SLA}})=\arg\max_{\bA, s\in\mathcal{S}(\bA)}\widehat{\Delta}_{\mathrm{LA}}(\bA, s).
    \]
\end{itemize}

The remaining task is to train the learner $\widehat{G}(\cdot)$. To this end, we bootstrap the observed data and construct a conventional CART for each bootstrap sample. Denote the resulting tree for the $b$th bootstrap sample by ${\cal T}^b$. For each node $\bA$ of ${\cal T}^b$, we identify the subtree rooted at $\bA$, denoted by ${\cal T}^b_{\bA}$. We then calculate the proportion of variation explained by ${\cal T}^b_{\bA}$, denoted by $O^b_{\bA}$, and extract a vector of node-specific predictive features, denoted by $\bW^b_{\bA}$. Thus, the $b$th bootstrap sample yields the training set
\[
\mathbf{D}^b=\{(\bW^b_{\bA},O^b_{\bA}):\bA\mbox{ is a node of } {\cal T}^b\}.
\]
By repeating this procedure for $B$ bootstrap samples, we obtain the combined training set
\[
\mathbf{D}=\mathbf{D}^1\cup \mathbf{D}^2\cup\cdots\cup \mathbf{D}^B
\]
with a sufficient number of observations. Using these data, any suitable machine-learning method may be used to train $\widehat{G}(\cdot)$ to predict $O^b_{\bA}$ from $\bW^b_{\bA}$. For example, random forests \citep{breiman2001} or gradient boosting methods \citep{friedman2001} may be used. The predictive features play an important role in estimating the proportion of variation explained by the CART subtree. We construct $\bW^b_{\bA}$ using the sample size and depth of node $\bA$ within ${\cal T}^b$, the correlations between $Y$ and the components of $\bX$, and the proportions of variation explained by depth-1 and depth-2 CARTs constructed using all predictors within $\bA$ via the conventional greedy algorithm. This process is summarized in Algorithm \ref{alg:learning}.

\algorithmtitle{train-learned-value}{Training the learned look-ahead value model} \label{alg:learning}

\begin{enumerate}
\item For $b=1,\ldots,B$:
\begin{itemize}
\item Draw a bootstrap sample from the original training data.
\item Grow a conventional CART ${\cal T}^b$ using the bootstrap sample.
\item For each node \(\bA\) of \(\mathcal{T}^b\):
    \begin{enumerate}
    \item Compute the feature vector $\bW^b_{\bA}$.
    \item Compute the proportion of variation explained by the CART subtree rooted at node $\bA$, denoted by $O^b_{\bA}$.
    \end{enumerate}
\item Collect the resulting supervised learning data set
\[
\{(\bW^b_{\bA}, O^b_{\bA}):\bA \mbox{ is a node of } {\cal T}^b \}.
\]
\end{itemize}
\item Assemble all training data 
\[
\{(\bW^b_{\bA}, O^b_{\bA}):\bA \mbox{ is a node of } {\cal T}^b,\ b=1,\ldots,B\}.
\]
\item Train a prediction model \(\widehat{G}(\cdot)\) to predict \(O^b_{\bA}\) from \(\bW^b_{\bA}\) using the assembled training data.
\end{enumerate}

The effectiveness of the smart look-ahead algorithm depends on whether the auxiliary model can accurately learn the downstream improvements based on node characteristics.

The speed of the smart algorithm depends on the balance between its offline training cost and online tree-construction cost. The full look-ahead method is expensive because it requires growing two separate CARTs for every candidate split at each leaf of the current tree. The smart look-ahead method avoids this repeated subtree growth by replacing it with evaluations of the trained prediction model \(\widehat{G}(\cdot)\). Once the model has been trained, evaluating \(\widehat{\Delta}_{\mathrm{LA}}(\bA,s)\) for each candidate split is typically much faster than growing complete downstream CARTs. Therefore, the smart algorithm can substantially reduce the cost of building the final tree, especially when there are many candidate splits. However, this speed gain requires an additional training stage in which bootstrap CARTs are grown, node-level features are extracted, downstream improvements are computed, and the auxiliary prediction model is fitted. For a single analysis of a small data set, this upfront cost may outweigh the savings during final tree construction. Thus, the smart method shifts computation from repeated online subtree growth to a one-time offline learning step and is particularly appealing for growing large trees with extensive split searches.

\section{Theoretical Properties}

This section establishes a finite-sample comparison between conventional CART and the exact full look-ahead procedure. The comparison concerns empirical risk on the training observations and does not imply an ordering of out-of-sample prediction risks.

We consider tree growth without pruning. Both methods use the same candidate-split sets, minimum leaf size, maximum depth, stopping rules, and deterministic tie-breaking rule. We further assume that the stopping rules are local to each node and that every exact look-ahead evaluation grows the candidate tree to completion using the conventional CART method.

For a partially grown tree $\mathcal T$, let $\mathcal C(\mathcal T)$ denote the set of admissible actions $a=(\bA,s)$, where $\bA$ is a terminal leaf and $s\in\mathcal S(\bA)$. Let $\mathcal T_a$ denote the tree obtained by applying only split $s$ to leaf $\bA$. Define the deterministic conventional splitting policy by
\[
\pi_{\mathrm{GD}}(\mathcal T)
\in
\arg\max_{a\in\mathcal C(\mathcal T)}
\Delta_{\mathrm{GD}}(a).
\]
Let $C_{\mathrm{GD}}(\mathcal T)$ be the terminal tree obtained by repeatedly applying $\pi_{\mathrm{GD}}$ to $\mathcal T$, and define its completion risk by
\[
V_{\mathrm{GD}}(\mathcal T)
=
R\{C_{\mathrm{GD}}(\mathcal T)\}.
\]
The exact full look-ahead policy evaluates each candidate action by its conventional completion risk and selects
\begin{equation}
\pi_{\mathrm{LA}}(\mathcal T)
\in
\arg\min_{a\in\mathcal C(\mathcal T)}
V_{\mathrm{GD}}(\mathcal T_a).
\label{eq:lookahead-policy}
\end{equation}
Under local stopping rules, completing the unchanged leaves contributes the same quantity for every candidate action. Therefore, minimizing the criterion in (\ref{eq:lookahead-policy}) is equivalent to maximizing the local look-ahead improvement $\Delta_{\mathrm{LA}}(\bA,s)$ defined in Section 2.

\begin{theorem}
Let $\mathcal T_0$ be the common initial tree. Let
\[
\mathcal T^{\mathrm{GD}}=C_{\mathrm{GD}}(\mathcal T_0)
\]
be the terminal tree constructed by conventional CART, and let $\mathcal T^{\mathrm{LA}}$ be the terminal tree obtained by repeatedly applying the policy in (\ref{eq:lookahead-policy}). Then
\[
R(\mathcal T^{\mathrm{LA}})
\leq
R(\mathcal T^{\mathrm{GD}}).
\]
Consequently, the training MSE of the exact full look-ahead tree is no greater than that of the conventional tree.
\end{theorem}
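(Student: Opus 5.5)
This is the standard rollout-improvement argument: a rollout policy built on top of a base heuristic never does worse than the base heuristic. Let $\mathcal T_0,\mathcal T_1,\ldots,\mathcal T_m=\mathcal T^{\mathrm{LA}}$ be the sequence of partial trees produced by the look-ahead policy, with $\mathcal T_{k+1}=(\mathcal T_k)_{a_k}$ and $a_k=\pi_{\mathrm{LA}}(\mathcal T_k)$. The plan is to show that the conventional completion risk $V_{\mathrm{GD}}(\mathcal T_k)$ is nonincreasing in $k$. Chaining the inequalities then gives
\[
R(\mathcal T^{\mathrm{LA}})=V_{\mathrm{GD}}(\mathcal T_m)\le V_{\mathrm{GD}}(\mathcal T_0)=R(\mathcal T^{\mathrm{GD}}),
\]
and dividing by $n$ yields the training-MSE statement.

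The key step is the one-step inequality $V_{\mathrm{GD}}(\mathcal T_{k+1})\le V_{\mathrm{GD}}(\mathcal T_k)$. Suppose $\mathcal T_k$ is not terminal, so $\mathcal C(\mathcal T_k)\neq\emptyset$. Then the conventional completion of $\mathcal T_k$ begins with the action $g=\pi_{\mathrm{GD}}(\mathcal T_k)\in\mathcal C(\mathcal T_k)$. Since the greedy policy is deterministic and Markov in the current tree, $C_{\mathrm{GD}}(\mathcal T_k)=C_{\mathrm{GD}}((\mathcal T_k)_g)$. This gives $V_{\mathrm{GD}}(\mathcal T_k)=V_{\mathrm{GD}}((\mathcal T_k)_g)$. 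By the definition (\ref{eq:lookahead-policy}) of $a_k$ as a minimizer over $\mathcal C(\mathcal T_k)\ni g$,
\[
V_{\mathrm{GD}}(\mathcal T_{k+1})=V_{\mathrm{GD}}((\mathcal T_k)_{a_k})\le V_{\mathrm{GD}}((\mathcal T_k)_g)=V_{\mathrm{GD}}(\mathcal T_k).
\]

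For the endpoints, note that a terminal tree has no admissible actions, so $C_{\mathrm{GD}}(\mathcal T_m)=\mathcal T_m$ and $V_{\mathrm{GD}}(\mathcal T_m)=R(\mathcal T^{\mathrm{LA}})$. By definition, $V_{\mathrm{GD}}(\mathcal T_0)=R(\mathcal T^{\mathrm{GD}})$.

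The main obstacle is making the identity $C_{\mathrm{GD}}(\mathcal T_k)=C_{\mathrm{GD}}((\mathcal T_k)_g)$ and the notion of ``terminal'' rigorous under the shared stopping rules. Four facts must be checked:
\begin{enumerate}
\item $C_{\mathrm{GD}}$ is a well-defined function of the partial tree. This uses the deterministic tie-breaking rule and the fact that each leaf's admissible splits depend only on that leaf (local stopping rules, minimum leaf size, maximum depth measured from the root).
\item The set of admissible actions that the look-ahead policy minimizes over is exactly the set from which $\pi_{\mathrm{GD}}$ selects.
\item Both procedures declare a tree terminal precisely when $\mathcal C(\mathcal T)=\emptyset$.
\item Every completion is finite, since leaf sizes are bounded below, so the look-ahead process also terminates after finitely many steps $m$.
\end{enumerate}
I would state these as consequences of the standing assumptions at the start of the proof.

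I would also remark why the paper's equivalence between (\ref{eq:lookahead-policy}) and maximizing $\Delta_{\mathrm{LA}}$ is not needed for the argument. Because conventional completion may interleave splits across leaves by global greedy order, I would work directly with $V_{\mathrm{GD}}$, which is what the theorem's policy uses.
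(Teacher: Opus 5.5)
Your proposal is correct and follows essentially the same rollout-improvement argument as the paper. Both use the identity $V_{\mathrm{GD}}(\mathcal T)=V_{\mathrm{GD}}(\mathcal T_{a_{\mathrm{GD}}})$, get the one-step inequality from the admissibility of the greedy action, chain it along the look-ahead sequence, and close with the same endpoint identifications $V_{\mathrm{GD}}(\mathcal T^{\mathrm{LA}})=R(\mathcal T^{\mathrm{LA}})$ and $V_{\mathrm{GD}}(\mathcal T_0)=R(\mathcal T^{\mathrm{GD}})$. Your extra checks (well-definedness of $C_{\mathrm{GD}}$, the common notion of terminal, finiteness, and working directly with $V_{\mathrm{GD}}$ rather than $\Delta_{\mathrm{LA}}$) only make explicit what the paper leaves to its standing assumptions.
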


\begin{proof}
Consider any nonterminal tree $\mathcal T$. Let
\[
a_{\mathrm{GD}}=\pi_{\mathrm{GD}}(\mathcal T)
\qquad\text{and}\qquad
a_{\mathrm{LA}}=\pi_{\mathrm{LA}}(\mathcal T).
\]
Because conventional completion from $\mathcal T$ begins with $a_{\mathrm{GD}}$,
\begin{equation}
V_{\mathrm{GD}}(\mathcal T)
=
V_{\mathrm{GD}}(\mathcal T_{a_{\mathrm{GD}}}).
\label{eq:greedy-completion}
\end{equation}
Moreover, $a_{\mathrm{GD}}\in\mathcal C(\mathcal T)$ is one of the actions considered by the look-ahead policy. Hence, by (\ref{eq:lookahead-policy}),
\begin{align}
V_{\mathrm{GD}}(\mathcal T_{a_{\mathrm{LA}}})
\leq
V_{\mathrm{GD}}(\mathcal T_{a_{\mathrm{GD}}})
=
V_{\mathrm{GD}}(\mathcal T),
\label{eq:policy-improvement}
\end{align}
where the equality follows from (\ref{eq:greedy-completion}). Thus, $V_{\mathrm{GD}}$ is nonincreasing after every split selected by the look-ahead policy.

Let $\mathcal T^{\mathrm{LA}}_0,\mathcal T^{\mathrm{LA}}_1,\ldots,\mathcal T^{\mathrm{LA}}_M$ denote the sequence of partial trees generated by the look-ahead policy, with $\mathcal T^{\mathrm{LA}}_0=\mathcal T_0$ and $\mathcal T^{\mathrm{LA}}_M=\mathcal T^{\mathrm{LA}}$ terminal. Repeated application of (\ref{eq:policy-improvement}) gives
\[
V_{\mathrm{GD}}(\mathcal T^{\mathrm{LA}})
\leq
V_{\mathrm{GD}}(\mathcal T_0).
\]
Because $\mathcal T^{\mathrm{LA}}$ is terminal, $C_{\mathrm{GD}}(\mathcal T^{\mathrm{LA}})=\mathcal T^{\mathrm{LA}}$, so the left-hand side equals $R(\mathcal T^{\mathrm{LA}})$. By definition, the right-hand side equals $R(\mathcal T^{\mathrm{GD}})$. This completes the proof.
\end{proof}

The preceding theorem applies only to the exact full look-ahead procedure under the stated assumptions and does not generally extend to limited-depth look-ahead, pruning, or out-of-sample prediction risk. In particular, the smart look-ahead method replaces the exact look-ahead improvement with an estimated value and may therefore select a suboptimal action. Nevertheless, its departure from the exact procedure can be controlled when the estimated improvement is uniformly close to the exact improvement over all admissible actions. If the approximation error is at most $\epsilon$ at each splitting step, the action selected by smart look-ahead has exact improvement at most $2\epsilon$ below that of the exact look-ahead action. The following corollary accumulates this one-step loss over the $M$ splits used to construct the terminal smart look-ahead tree, yielding a finite-sample training-risk bound relative to conventional CART.

\begin{corollary}[Finite-sample guarantee for smart look-ahead CART]
Let $\mathcal T_0$ be the common initial tree, and let
\[
\mathcal T^{\mathrm{GD}} = C_{\mathrm{GD}}(\mathcal T_0)
\]
denote the terminal tree constructed by conventional CART. Suppose the smart
look-ahead procedure generates the sequence
\[
\mathcal T^{\mathrm{SLA}}_0 = \mathcal T_0,\quad
\mathcal T^{\mathrm{SLA}}_1,\quad \ldots,\quad
\mathcal T^{\mathrm{SLA}}_M = \mathcal T^{\mathrm{SLA}},
\]
where $\mathcal T^{\mathrm{SLA}}$ is terminal. At a nonterminal tree $\mathcal T$, for each admissible action
$a=(\bA,s)\in \mathcal C(\mathcal T)$, let
\[
\Delta_{\mathrm{LA}}(a)
=
\Delta_{\mathrm{LA}}(\bA,s)
\]
denote the exact look-ahead improvement, and let
$\widehat{\Delta}_{\mathrm{LA}}(a)$ denote its smart look-ahead approximation.
Suppose that, 
\[
\max_{0\le m <M}\sup_{a\in \mathcal C(\mathcal T_m^{\mathrm{SLA}})}
\left|
\widehat{\Delta}_{\mathrm{LA}}(a)
-
\Delta_{\mathrm{LA}}(a)
\right|
\leq \epsilon.
\tag{6}
\]

Under the same
assumptions on candidate splits, stopping rules, and deterministic tie breaking
as in Theorem~1,
\[
R(\mathcal T^{\mathrm{SLA}})
\leq
R(\mathcal T^{\mathrm{GD}})
+
2M\epsilon.
\tag{7}
\]
\end{corollary}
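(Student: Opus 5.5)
We mimic the potential-function argument from the proof of Theorem 1, with $V_{\mathrm{GD}}$ as the potential. The exact policy makes $V_{\mathrm{GD}}$ nonincreasing at each step. Here we instead aim to show that each smart look-ahead step can increase $V_{\mathrm{GD}}$ by at most $2\epsilon$. Telescoping over the $M$ steps then gives (7).

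\textbf{Step 1 (rewrite in terms of $V_{\mathrm{GD}}$).} Fix a nonterminal $\mathcal T=\mathcal T^{\mathrm{SLA}}_m$ with $m<M$. We use the remark preceding Theorem 1: under local stopping rules, completing the unchanged leaves contributes the same amount for every action. Hence there is a constant $c(\mathcal T)$, independent of $a$, with
\[
V_{\mathrm{GD}}(\mathcal T_a)=c(\mathcal T)-\Delta_{\mathrm{LA}}(a)\quad\text{for all } a\in\mathcal C(\mathcal T).
\]
Concretely, $c(\mathcal T)$ is the sum of $R(\bA)$ over the split leaf and the greedy completion risks of the other leaves, and this sum is the same for all actions. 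I would verify this identity carefully, since it is the one place where locality of the stopping rules and the definition $\Delta_{\mathrm{LA}}(\bA,s)=R(\bA)-R(\mathcal T_{\bA,s})$ enter.

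\textbf{Step 2 (one-step $2\epsilon$ loss).} Let $a_{\mathrm{SLA}}$ maximize $\widehat\Delta_{\mathrm{LA}}$ over $\mathcal C(\mathcal T)$, and let $a_{\mathrm{GD}}=\pi_{\mathrm{GD}}(\mathcal T)$. By (6) and the maximality of $a_{\mathrm{SLA}}$,
\[
\Delta_{\mathrm{LA}}(a_{\mathrm{SLA}})\ge \widehat\Delta_{\mathrm{LA}}(a_{\mathrm{SLA}})-\epsilon\ge \widehat\Delta_{\mathrm{LA}}(a_{\mathrm{GD}})-\epsilon\ge \Delta_{\mathrm{LA}}(a_{\mathrm{GD}})-2\epsilon .
\]
Comparing with $a_{\mathrm{GD}}$ rather than the exact look-ahead maximizer is enough, and slightly simpler. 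Combining with Step 1 and with (\ref{eq:greedy-completion}) gives
\[
V_{\mathrm{GD}}(\mathcal T^{\mathrm{SLA}}_{m+1})=V_{\mathrm{GD}}(\mathcal T_{a_{\mathrm{SLA}}})\le V_{\mathrm{GD}}(\mathcal T_{a_{\mathrm{GD}}})+2\epsilon=V_{\mathrm{GD}}(\mathcal T^{\mathrm{SLA}}_m)+2\epsilon .
\]

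\textbf{Step 3 (telescoping).} Summing the bound of Step 2 over $m=0,\dots,M-1$ yields
\[
V_{\mathrm{GD}}(\mathcal T^{\mathrm{SLA}})\le V_{\mathrm{GD}}(\mathcal T_0)+2M\epsilon .
\]
Since $\mathcal T^{\mathrm{SLA}}$ is terminal, $C_{\mathrm{GD}}(\mathcal T^{\mathrm{SLA}})=\mathcal T^{\mathrm{SLA}}$, so the left side equals $R(\mathcal T^{\mathrm{SLA}})$. The right side equals $R(\mathcal T^{\mathrm{GD}})+2M\epsilon$ by definition, which is (7).

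The main obstacle is Step 1, namely the justification that the off-leaf completion term is action-independent. One subtle point is that conventional completion selects splits globally across leaves. So I would argue that, with local stopping rules and deterministic tie-breaking, the greedy completion of disjoint leaves decomposes leaf-by-leaf. Each leaf's final subtree then depends only on its own data, regardless of the global order in which splits are applied. The sup in (6) requires $\mathcal C(\mathcal T)$ to be finite or the maxima to be attained; I would assume finitely many candidate splits, as is implicit in the paper's setup.
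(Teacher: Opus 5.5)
Your architecture matches the paper's: $V_{\mathrm{GD}}$ as a potential, a $2\epsilon$ loss per step, then telescoping. Comparing directly with $a_{\mathrm{GD}}$ instead of the exact maximizer $a_m^*$ is a harmless shortcut. The gap is Step 1. The identity $V_{\mathrm{GD}}(\mathcal T_a)=c(\mathcal T)-\Delta_{\mathrm{LA}}(a)$, with $c$ independent of $a$, is false for actions on different leaves.

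Let $V(\bB)$ denote the risk of the conventional subtree grown from a leaf $\bB$ alone. Carrying out the leaf-by-leaf decomposition you describe gives, for every admissible $(\bA,s)$,
\[
V_{\mathrm{GD}}(\mathcal T_{(\bA,s)})=V_{\mathrm{GD}}(\mathcal T)+\{R(\bA)-V(\bA)\}-\Delta_{\mathrm{LA}}(\bA,s).
\]
So your own formula $c=R(\bA)+\sum_{\ell\neq\bA}V(\ell)$ depends on the chosen leaf through $R(\bA)-V(\bA)$.

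Concretely, take two splittable leaves $\bA_1,\bA_2$, and let $g_i=(\bA_i,s_i)$ with $s_i\in\arg\max_s\Delta_{\mathrm{GD}}(\bA_i,s)$. Then $V_{\mathrm{GD}}(\mathcal T_{g_1})=V_{\mathrm{GD}}(\mathcal T_{g_2})=V_{\mathrm{GD}}(\mathcal T)$. Yet $\Delta_{\mathrm{LA}}(g_i)=R(\bA_i)-V(\bA_i)$, and these two values differ in general. Hence, when $a_{\mathrm{SLA}}$ and $a_{\mathrm{GD}}$ lie on different leaves, the inequality $\Delta_{\mathrm{LA}}(a_{\mathrm{SLA}})\ge\Delta_{\mathrm{LA}}(a_{\mathrm{GD}})-2\epsilon$ says nothing about $V_{\mathrm{GD}}$.

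The paper's proof rests on the same claim, that ``maximizing $\Delta_{\mathrm{LA}}$ is equivalent to minimizing $V_{\mathrm{GD}}(\mathcal T_a)$''. That equivalence holds only among splits of a single leaf. The paper's intermediate inequality $V_{\mathrm{GD}}(\mathcal T_{\widehat a_m})\le V_{\mathrm{GD}}(\mathcal T_{a_m^*})+2\epsilon$ can in fact fail when $\widehat a_m$ and $a_m^*$ sit on different leaves. So following the paper does not rescue your Step 2.

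The fix is to compare with the greedy split on the same leaf. Let $\widehat\bA$ be the leaf of $a_{\mathrm{SLA}}$, and let $g=(\widehat\bA,s_g)$ with $s_g\in\arg\max_s\Delta_{\mathrm{GD}}(\widehat\bA,s)$. Because $\widehat\bA$ is splittable, its conventional completion begins with $s_g$. Therefore $R(\widehat\bA)-V(\widehat\bA)=\Delta_{\mathrm{LA}}(g)$, and the display becomes
\[
V_{\mathrm{GD}}(\mathcal T_{(\widehat\bA,s)})=V_{\mathrm{GD}}(\mathcal T)+\Delta_{\mathrm{LA}}(g)-\Delta_{\mathrm{LA}}(\widehat\bA,s).
\]
Since $g\in\mathcal C(\mathcal T)$, your Step 2 chain with $a_{\mathrm{GD}}$ replaced by $g$ gives $\Delta_{\mathrm{LA}}(a_{\mathrm{SLA}})\ge\widehat\Delta_{\mathrm{LA}}(g)-\epsilon\ge\Delta_{\mathrm{LA}}(g)-2\epsilon$. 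Hence $V_{\mathrm{GD}}(\mathcal T^{\mathrm{SLA}}_{m+1})\le V_{\mathrm{GD}}(\mathcal T^{\mathrm{SLA}}_m)+2\epsilon$, and Step 3 goes through unchanged.

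Bound (7) is therefore true, but it needs this same-leaf comparison rather than an action-independent constant. The leaf-by-leaf decomposition you planned to prove is exactly the ingredient that supplies it.
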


\begin{proof}
Fix a step $m$ and write $\mathcal T=\mathcal T_m^{\mathrm{SLA}}$. Let
$a_m^*\in\arg\max_{a\in\mathcal C(\mathcal T)}\Delta_{\mathrm{LA}}(a)$.
By the uniform approximation bound in~(6) and the definition of $\widehat a_m$,
\[
\Delta_{\mathrm{LA}}(a_m^*)
\leq \widehat{\Delta}_{\mathrm{LA}}(a_m^*)+\epsilon
\leq \widehat{\Delta}_{\mathrm{LA}}(\widehat a_m)+\epsilon
\leq \Delta_{\mathrm{LA}}(\widehat a_m)+2\epsilon.
\]
Under the local stopping rules, maximizing $\Delta_{\mathrm{LA}}(a)$ is equivalent to minimizing $V_{\mathrm{GD}}(\mathcal T_a)$. Since the conventional action $a_{\mathrm{GD}}=\pi_{\mathrm{GD}}(\mathcal T)$ is admissible,
\[
V_{\mathrm{GD}}(\mathcal T_{\widehat a_m})
\leq V_{\mathrm{GD}}(\mathcal T_{a_m^*})+2\epsilon
\leq V_{\mathrm{GD}}(\mathcal T_{a_{\mathrm{GD}}})+2\epsilon
=V_{\mathrm{GD}}(\mathcal T)+2\epsilon.
\]
Applying this inequality over the $M$ smart look-ahead steps gives
\[
V_{\mathrm{GD}}(\mathcal T_M^{\mathrm{SLA}})
\leq V_{\mathrm{GD}}(\mathcal T_0)+2M\epsilon.
\]
Finally, $\mathcal T_M^{\mathrm{SLA}}=\mathcal T^{\mathrm{SLA}}$ is terminal, so
$V_{\mathrm{GD}}(\mathcal T_M^{\mathrm{SLA}})=R(\mathcal T^{\mathrm{SLA}})$, while
$V_{\mathrm{GD}}(\mathcal T_0)=R(\mathcal T^{\mathrm{GD}})$. Substitution proves~(7).
\end{proof}
\section{Simulation Study}

We conduct a simulation study to compare three tree-building strategies:
conventional greedy CART, full look-ahead CART, and smart look-ahead CART using
learned split values. The primary objective is to evaluate whether look-ahead
splitting improves prediction accuracy in settings where the optimal early split
has limited immediate predictive value but enables substantial downstream
improvement. A second objective is to determine whether the smart look-ahead
method can approximate the predictive performance of full look-ahead CART while
reducing computational cost. The simulation study therefore evaluates both
statistical performance and computational efficiency.

Let $\bX=(X_1,\ldots,X_{10})$ denote the predictors, with each component generated independently from a uniform distribution $U(-1, 1)$. The continuous outcome is generated from
\[
Y=f(\bX)+\varepsilon,
\]
where $f(\bX)$ is the true regression function and $\varepsilon\sim N(0,0.25)$ is a mean-zero noise term. Specifically, we consider the following four simulation settings:
\begin{enumerate}
\item $f(\bX)=I(X_1X_2>0)+I(X_3X_4>0)+I(X_5X_6>0),$ 
\item $f(\bX)=I(X_1X_2>0)+I(X_3X_4>0)+(X_5+X_6)/8,$ 
\item $f(\bX)=I(X_1X_2>0)+(X_3+X_4+X_5+X_6)/8,$ 
\item $f(\bX)=(X_1+X_2+X_3+X_4+X_5+X_6)/8.$ 
\end{enumerate}

In all four settings, $X_7$, $X_8$, $X_9$, and $X_{10}$ are noise features.
For each simulation setting, we generate independent training sets with $n\in\{500,1000,2000\}$ observations. For each training set, we implement eight CART procedures: conventional, full look-ahead, and two versions of smart look-ahead, each with and without BIC-based pruning. When growing each CART, we set the maximum depth to 6 and the minimum node size to 25. For all look-ahead methods, we consider 20 candidate split points per predictor, chosen as empirical quantiles. To implement the smart look-ahead methods, we repeatedly bootstrap the training set to obtain at least 500 nodes for training a random forest with 50 regression trees as $\widehat{G}(\cdot)$.
The first smart look-ahead method uses node size, node depth (i.e., the number of binary rules defining the node), and the correlations between $Y$ and the components of $\bX$ as predictive features. The second smart look-ahead method additionally uses the proportions of variation explained by depth-1 and depth-2 CARTs trained within the node using the conventional greedy algorithm. Pruning is performed using the BIC criterion, with the total number of internal and terminal nodes treated as the degrees of freedom to encourage simple trees. Each fitted regression tree is evaluated using mean squared prediction error (MSE) on an independently generated test set of $m=10{,}000$ observations. The number of terminal leaves is recorded as a measure of tree complexity. For the smart look-ahead methods, we also record the coefficient of determination, $R^2$, of $\widehat{G}(\cdot)$.

We repeat the training and evaluation procedure 400 times for each simulation setting and summarize the training and test MSEs and the number of terminal leaves using their means  across repetitions. The results for $n=1000$ are summarized in Table \ref{tab:simulation-results-n1000-rerun} and Figure \ref{fig:sim-mse}. Results for $n=500$ and $n=2000$ are provided in Tables \ref{tab:simulation-results-n500} and \ref{tab:simulation-results-n2000-official} of supplementary materials, respectively. In the first three settings, which contain important interactions, the look-ahead methods performed substantially better than the conventional method. Pruning did not materially change this pattern but helped stabilize predictive performance (Figure \ref{fig:sim-mse}). The second smart look-ahead method, which uses a richer set of predictive features, performed similarly to or even better than the full look-ahead method. The first smart look-ahead method performed worse than the full look-ahead method but still outperformed the conventional method. Its poorer performance was also reflected in the lower $R^2$ of the auxiliary random forest model. However, the second smart look-ahead method had a higher computational cost than the first. In Setting 4, all methods performed similarly, with the conventional method performing slightly better than the others. This result is expected because the data-generating model contains no interactions and therefore provides no particular advantage to look-ahead splitting.

\begin{figure}[htbp] 
\centering
\includegraphicsorplaceholder[width=\textwidth]{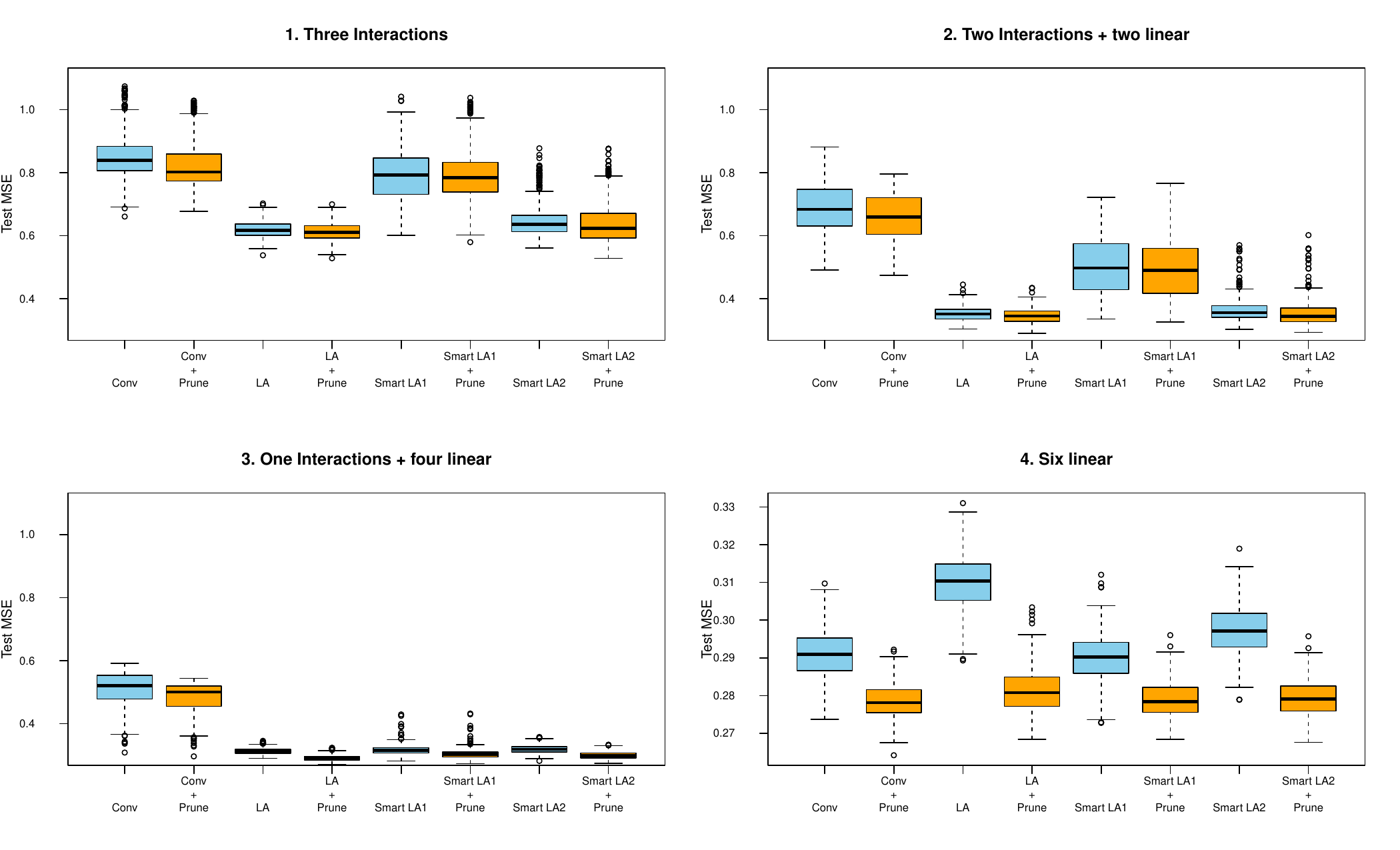}
\caption{Distributions of test-set MSE under four simulation settings for $n=1000$. Conv: conventional greedy method; LA: full look-ahead method; Smart LA 1: the first smart look-ahead method with the first set of node level features; Smart LA 2: the second smart look-ahead method with the second set of node level features; Blue: without pruning; Orange: with BIC-based pruning.}
\label{fig:sim-mse}
\end{figure}

\begin{table}[htbp]

\centering
\caption{Summary of the independently rerun simulation results across the four settings for $n=1000$. Conventional: conventional greedy method; Full LA: full look-ahead method; Smart LA 1: the first smart look-ahead method with the first set of node level features; Smart LA 2: the second smart look-ahead method with the second set of node level features.   Entries are based on  400 simulation repetitions (Training MSE: mean squared prediction error in the training set; Test MSE: mean squared prediction error in the test set; No. of leaves: mean number of terminal leaves of constructed CARTs; RF $R^2:$ the proportion of variation explained by $\hat{G}(\cdot)$ in predicting the value of a candidate split).\\}
\label{tab:simulation-results-n1000-rerun}
\small
\setlength{\tabcolsep}{5pt}
\resizebox{\textwidth}{!}{%
\begin{tabular}{llcccc}
\hline
Setting & Method & Training MSE & Test MSE & No. of leaves & RF $R^2$ \\
\hline
1 & Conventional & 0.704 & 0.853 & 22.518 & -- \\
  & Conventional + pruning & 0.773 & 0.824 & 8.973 & -- \\
  & Full LA & 0.458 & 0.619 & 31.553 & -- \\
  & Full LA + pruning & 0.472 & \textbf{0.613} & 27.265 & -- \\
  & Smart LA 1 & 0.670 & 0.795 & 24.648 & 0.772 \\
  & Smart LA 1 + pruning & 0.736 & 0.791 & 11.060 & 0.772 \\
  & Smart LA 2 & 0.517 & 0.649 & 28.925 & 0.943 \\
  & Smart LA 2 + pruning & 0.551 & 0.640 & 19.265 & 0.943 \\
\hline
2 & Conventional & 0.556 & 0.687 & 23.700 & -- \\
  & Conventional + pruning & 0.618 & 0.661 & 8.875 & -- \\
  & Full LA & 0.277 & 0.352 & 30.510 & -- \\
  & Full LA + pruning & 0.288 & \textbf{0.346} & 24.563 & -- \\
  & Smart LA 1 & 0.420 & 0.501 & 26.328 & 0.728 \\
  & Smart LA 1 + pruning & 0.450 & 0.492 & 15.610 & 0.728 \\
  & Smart LA 2 & 0.296 & 0.366 & 29.778 & 0.930 \\
  & Smart LA 2 + pruning & 0.317 & 0.355 & 18.448 & 0.930 \\
\hline
3 & Conventional & 0.403 & 0.509 & 24.358 & -- \\
  & Conventional + pruning & 0.455 & 0.483 & 6.520 & -- \\
  & Full LA & 0.223 & 0.313 & 31.198 & -- \\
  & Full LA + pruning & 0.260 & \textbf{0.291} & 12.763 & -- \\
  & Smart LA 1 & 0.257 & 0.317 & 26.738 & 0.689 \\
  & Smart LA 1 + pruning & 0.290 & 0.304 & 6.853 & 0.689 \\
  & Smart LA 2 & 0.244 & 0.319 & 27.988 & 0.905 \\
  & Smart LA 2 + pruning & 0.286 & 0.299 & 5.873 & 0.905 \\
\hline
4 & Conventional & 0.224 & 0.291 & 25.743 & -- \\
  & Conventional + pruning & 0.259 & \textbf{0.278} & 4.665 & -- \\
  & Full LA & 0.207 & 0.310 & 31.705 & -- \\
  & Full LA + pruning & 0.266 & 0.281 & 3.998 & -- \\
  & Smart LA 1 & 0.228 & 0.290 & 25.695 & 0.860 \\
  & Smart LA 1 + pruning & 0.265 & 0.279 & 3.833 & 0.860 \\
  & Smart LA 2 & 0.221 & 0.297 & 27.960 & 0.958 \\
  & Smart LA 2 + pruning & 0.266 & 0.279 & 4.015 & 0.958 \\
\hline
\end{tabular}
}
\end{table}

In a second set of simulations, we compare the computational speeds of the full and smart look-ahead methods. We generate 50 training data sets, each containing $n=1000$ observations, under Setting 4. For each data set, we construct CART using both methods while varying the number of candidate cutoffs per predictor from 10 to 100 in increments of 10 and record the running time. Figure \ref{fig:sim-time} summarizes the running times across the 50 repetitions. When the number of candidate cutoffs is small, the full look-ahead method is faster because the cost of training the auxiliary prediction model for the smart look-ahead method exceeds the computational savings from using it to score a relatively small number of candidate splits. As the number of candidate cutoffs increases, however, the repeated subtree fitting required by the full look-ahead method becomes increasingly expensive, and the smart look-ahead method becomes substantially faster.

\begin{figure}[H]
\centering
\includegraphicsorplaceholder[width=\textwidth]{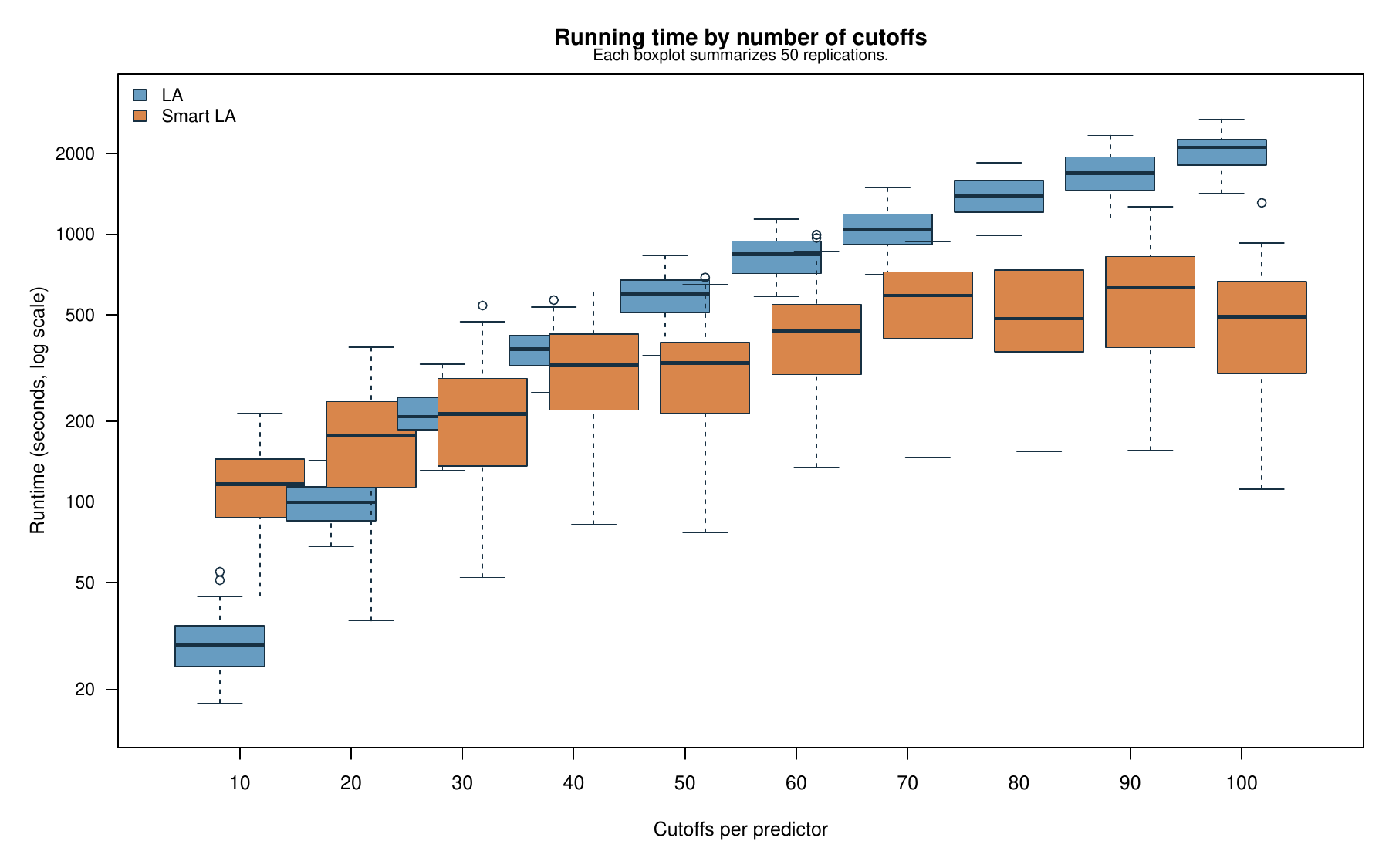}
\caption{Comparison of running times for the full and smart look-ahead methods across 50 replicates.}
\label{fig:sim-time}
\end{figure}

\section{Example}
To evaluate the proposed methods beyond selected simulation settings, we apply them to analyze two real-world datasets and compare their predictive performance with that of conventional CART. The analysis considers full look-ahead and both smart look-ahead procedures, each with and without BIC-based pruning. The model performance is estimated via repeated $80/20$ random train-test splitting as the true data generating mechanism is unknown.

\subsection{Parkinsons Telemonitoring}

We evaluated our methods on the Parkinsons Telemonitoring dataset from the UCI Machine Learning Repository \citep{tsanas2009dataset}. The dataset contains biomedical voice measurements collected from $42$ individuals with early-stage Parkinson’s disease during a six-month telemonitoring trial. It consists of 5,875 observations and includes two target variables, motor UPDRS and total UPDRS, as well as subject age, gender, time since baseline recruitment, and 16 biomedical voice measures including five jitter measures, six shimmer measures, noise-to-harmonics ratio,  harmonics-to-noise ratio, two nonlinear measures, and pitch period entropy.

For each target variable, we conducted our analysis at the observation level to evaluate the same eight CART procedures considered in the simulation study: conventional CART, full look-ahead, and two versions of smart look-ahead methods, each with and without BIC-based pruning. We performed $100$ independent evaluations. In each evaluation, we randomly assigned $80\%$ of the observations to the training set and the remaining $20\%$ to the test set following the  approach used  in the  paper \citep{tsanas2010}. Subject ID and time since baseline recruitment were excluded from the predictor variables. We considered 10 candidate splits per predictor.

In each bootstrap iteration used to generate supervised data for the auxiliary model in smart look-ahead method, we drew a sample with replacement equal in size to the full training set. We grew a conventional CART tree using each bootstrap sample and included in the auxiliary-model training set not only the nodes retained in the final bootstrap tree but also all nodes evaluated during tree construction. Given the size and complexity of this data set, this approach produced a larger and more varied collection of node-level training examples. We repeated the procedure until we had collected at least 10,000 node-level examples and then fitted the auxiliary random forest, $\widehat{G}(\cdot)$, using 500 regression trees. The results are summarized in Figure~\ref{fig:parkinsons-boxplots-observation-level} and Table~\ref{tab:parkinsons-results-observation-level}.

\begin{figure}[H]
\centering
\begin{tabular}{@{}cc@{}}
\includegraphics[width=0.48\textwidth]{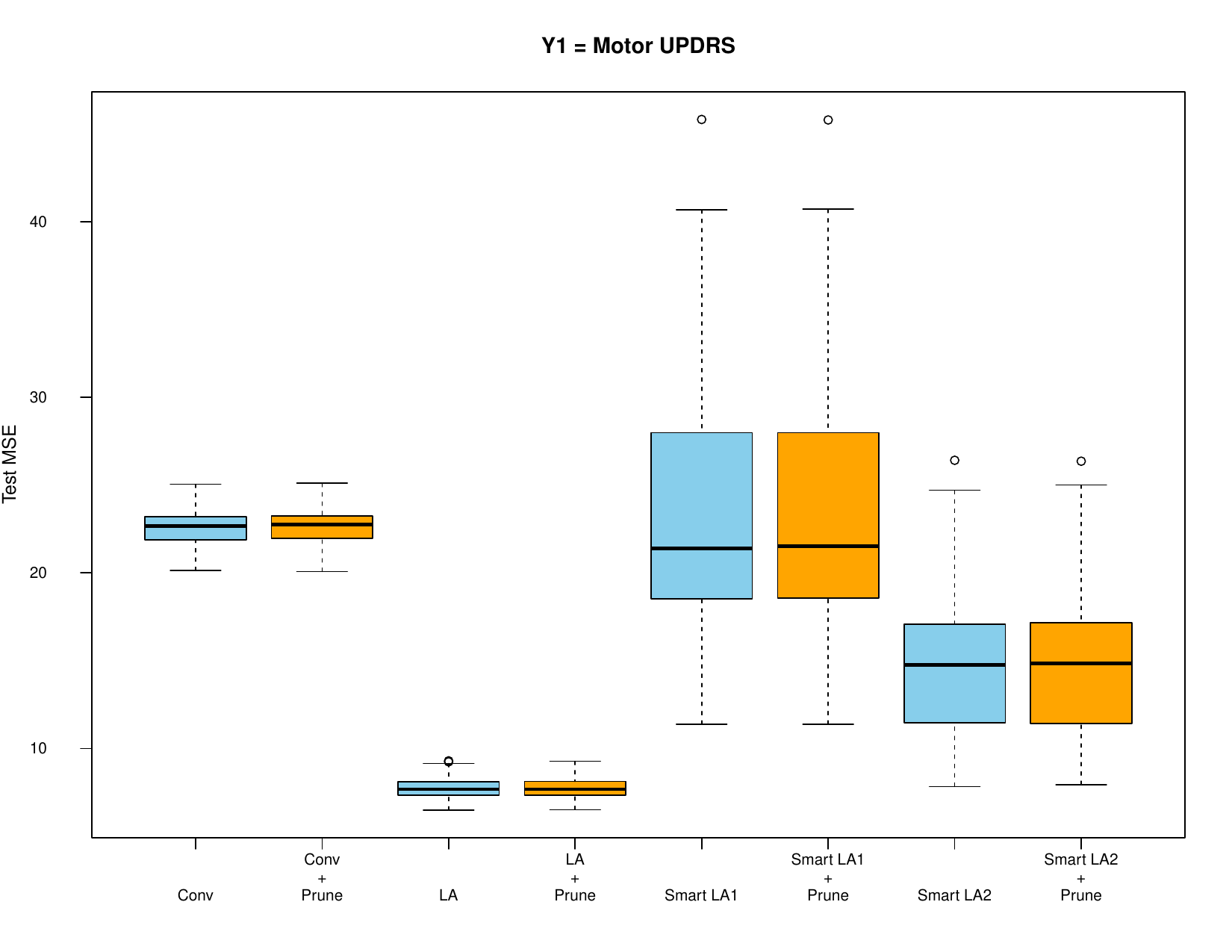} &
\includegraphics[width=0.48\textwidth]{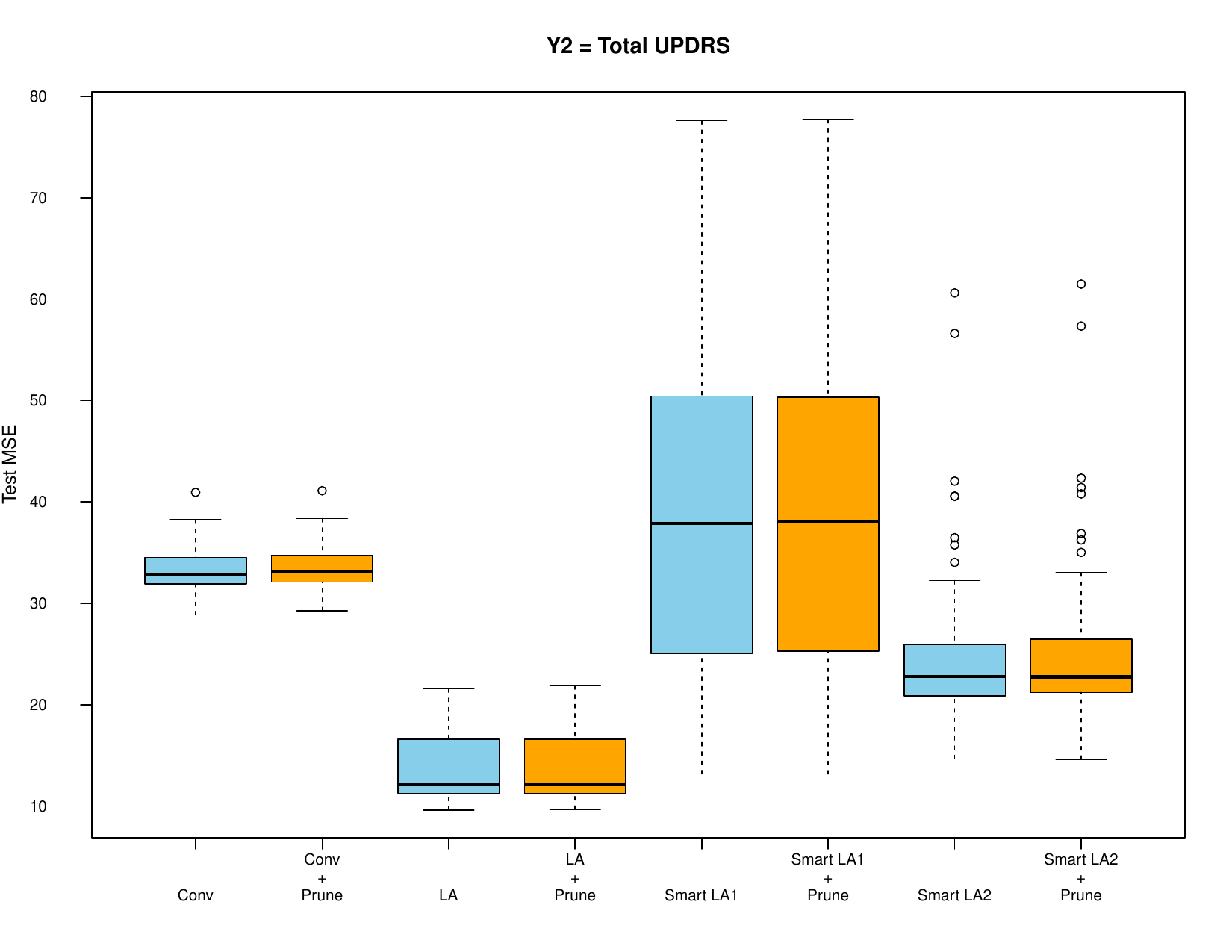} \\
(a) Motor UPDRS & (b) Total UPDRS
\end{tabular}
\caption{Distributions of test MSE across the 100 independent observation-level train--test partitions for the Parkinsons Telemonitoring dataset. Conv: conventional greedy method; LA: full look-ahead method; Smart LA 1: the first smart look-ahead method with the first set of node level features; Smart LA 2: the second smart look-ahead method with the second set of node level features; Blue: without pruning; Orange: with BIC-based pruning. Panel (a) shows results for predicting motor UPDRS, and panel (b) shows results for predicting total UPDRS.}
\label{fig:parkinsons-boxplots-observation-level}
\end{figure}

\begin{table}[htbp]
\centering
\caption{Summary of the Parkinsons Telemonitoring results. Entries are based on 100 independent observation-level train--test partitions. Conventional: conventional greedy method; Full LA: full look-ahead method; Smart LA 1: the first smart look-ahead method with the first set of node level features; Smart LA 2: the second smart look-ahead method with the second set of node level features.\\}
\label{tab:parkinsons-results-observation-level}
\small
\setlength{\tabcolsep}{4pt}
\resizebox{\textwidth}{!}{%
\begin{tabular}{llcccc}
\hline
Target Variable & Method & Training MSE & Test MSE & No. leaves & RF \(R^2\) \\
\hline
Motor UPDRS & Conventional & 21.603 & 22.658 & 42.210 & -- \\
& Conventional + pruning & 21.857 & 22.695 & 21.940 & -- \\
& Full LA & 7.121 & \textbf{7.735} & 53.480 & -- \\
& Full LA + pruning & 7.260 & 7.754 & 40.590 & -- \\
& Smart LA1 & 22.716 & 23.294 & 40.160 & 0.972 \\
& Smart LA1 + pruning & 22.893 & 23.327 & 24.240 & 0.972 \\
& Smart LA2 & 14.316 & 14.947 & 52.500 & 0.995 \\
& Smart LA2 + pruning & 14.537 & 15.007 & 34.690 & 0.995 \\
\hline
Total UPDRS & Conventional & 32.377 & 33.311 & 44.250 & -- \\
& Conventional + pruning & 32.803 & 33.552 & 27.790 & -- \\
& Full LA & 12.414 & \textbf{13.810} & 61.610 & -- \\
& Full LA + pruning & 12.689 & 13.833 & 48.270 & -- \\
& Smart LA1 & 38.006 & 38.838 & 43.340 & 0.963 \\
& Smart LA1 + pruning & 38.349 & 38.973 & 28.100 & 0.963 \\
& Smart LA2 & 23.112 & 24.212 & 54.180 & 0.992 \\
& Smart LA2 + pruning & 23.497 & 24.421 & 36.910 & 0.992 \\
\hline
\end{tabular}
}
\end{table}

In this observation-level analysis, full look-ahead achieved the lowest mean test MSE for both motor and total UPDRS. The second smart look-ahead method did not match the performance of full look-ahead, but it achieved substantially lower mean test MSE than conventional CART for both outcomes. The first smart look-ahead method performed similarly to conventional CART for motor UPDRS but had  a  higher mean test MSE for total UPDRS. This contrast suggests that the simpler auxiliary model used by the first smart look-ahead method may not adequately capture the more complex interaction structure associated with total UPDRS, whereas the richer features used by the second smart look-ahead method may provide a more accurate approximation of downstream split values. Pruning had little effect on the mean test MSE for any of the methods.

Lastly, using  all observations, we fitted final unpruned regression trees from full look-ahead and the second smart look-ahead methods for each target variable. The resulting trees for motor UPDRS are shown in Figures~\ref{fig:parkinsons-y1-full-la-tree} and~\ref{fig:parkinsons-y1-smart-la2-tree}, and the corresponding trees for total UPDRS are shown in Figures~\ref{fig:parkinsons-y2-full-la-tree} and~\ref{fig:parkinsons-y2-smart-la2-tree} in the supplementary materials.

\begin{remark}
Observation-level cross-validation ignores the fact that each patient contributes multiple observations and may therefore give overly optimistic results when the goal is to evaluate prediction performance in a new group of patients. We also conducted patient-level cross-validation, in which all observations from a subset of patients were held out for estimating test error. The resulting test error was much larger than the corresponding training error, suggesting that the constructed CART model is more suitable for predicting updated UPDRS values for existing patients than for predicting UPDRS values in new patients.
\end{remark}

\subsection{Concrete Compressive Strength} 
We evaluated our methods on the Concrete Compressive Strength dataset from the UCI Machine Learning Repository \citep{yeh1998dataset}. The data set contains 1,030 observations, with concrete compressive strength as the target variable and eight predictors including cement, blast furnace slag,  fly ash, water, superplasticizer, coarse aggregate, fine aggregate, and age.

We evaluated the same eight CART procedures considered in the simulation study and first example: conventional CART, full look-ahead, and two versions of smart look-ahead methods, each with and without BIC-based pruning. We performed 100 independent evaluations. For each evaluation, we randomly assigned 80\% of the 1,030 observations to the training set and the remaining 20\% to the test set. We considered 10 candidate splits per predictor. To train the auxiliary random forest used by the smart look-ahead procedures, we drew observations with replacement from the training set in each bootstrap replicate and repeated this procedure until we had collected at least 1,000 nodes. The results are summarized in Figure \ref{fig:concrete-boxplot} and Table \ref{tab:concrete-results}.

\begin{figure}[H]
\centering
\includegraphicsorplaceholder[width=0.65\textwidth]{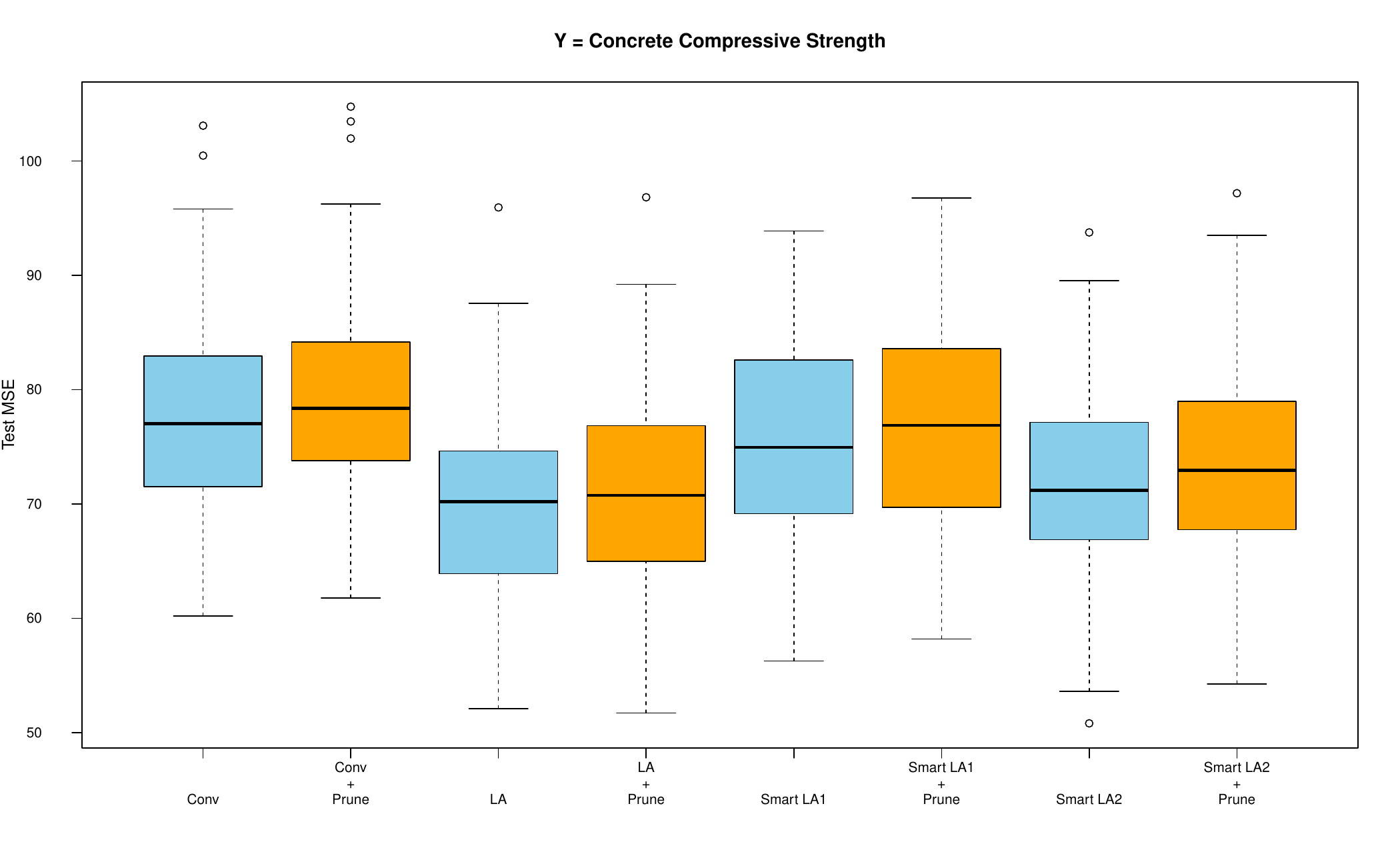}
\caption{Distributions of test MSE across the 100 independent train--test partitions for the Concrete Compressive Strength dataset. Conv: conventional greedy method; LA: full look-ahead method; Smart LA 1: the first smart look-ahead method with the first set of node level features; Smart LA 2: the second smart look-ahead method with the second set of node level features; Blue: without pruning; Orange: with BIC-based pruning.}
\label{fig:concrete-boxplot}
\end{figure}

\begin{table}[htbp]
\centering
\caption{Summary of the Concrete Compressive Strength results. Entries are means across 100 repetitions. Conventional: conventional greedy method; Full LA: full look-ahead method; Smart LA 1: the first smart look-ahead method with the first set of node level features; Smart LA 2: the second smart look-ahead method with the second set of node level features.\\
}
\label{tab:concrete-results}
\small
\setlength{\tabcolsep}{4pt}
\resizebox{\textwidth}{!}{%
\begin{tabular}{llcccc}
\hline
Target Variable & Method & Training MSE & Test MSE & No. leaves & RF \(R^2\) \\
\hline
Concrete Compressive Strength & Conventional & 64.186 & 77.756 & 22.970 & -- \\
& Conventional + pruning & 66.431 & 79.402 & 18.680 & -- \\
& Full LA & 51.787 & \textbf{69.677} & 26.350 & -- \\
& Full LA + pruning & 53.432 & 70.574 & 23.110 & -- \\
& Smart LA1 & 62.136 & 75.477 & 23.530 & 0.936 \\
& Smart LA1 + pruning & 64.262 & 76.955 & 19.630 & 0.936 \\
& Smart LA2 & 56.922 & 72.051 & 24.630 & 0.993 \\
& Smart LA2 + pruning & 58.560 & 73.338 & 21.580 & 0.993 \\
\hline
\end{tabular}
}
\end{table}

Full look-ahead achieved lower training and test MSE than conventional CART. The first smart look-ahead method yielded modest reductions in both measures, while the second smart look-ahead method yielded larger reductions and achieved performance closer to that of full look-ahead.

Using all observations, we then fitted two final trees: trees from full look-ahead and the second smart look-ahead methods, both without pruning. The resulting trees are shown in Figures \ref{fig:concrete-full-la-tree} and \ref{fig:concrete-smart-la2-tree}, respectively, in the supplementary section.

\section{Discussion}

This paper proposes a look-ahead approach that addresses the myopic nature of conventional CART splitting. The full method evaluates each candidate split by the quality of the downstream subtree that follows it, allowing early splits to receive credit for improvements revealed only after further partitioning. The smart method reduces the computational burden by learning downstream split values from node-level features. Our simulations suggest that look-ahead splitting can improve prediction in interaction-driven settings and that the smart approximation can retain much of this benefit when many candidate splits must be evaluated.

The framework is closely related to reinforcement learning and approximate dynamic programming \citep{sutton2018}. A partially grown tree can be viewed as a state, a candidate split as an action, and the reduction in prediction error as a reward. Conventional CART maximizes the immediate reward, whereas full look-ahead CART evaluates a split through a downstream rollout \citep{bertsekas1997}. The smart method resembles action-value approximation because it predicts the long-term value of a split without explicitly completing the rollout. Although the proposed procedure is not a full reinforcement-learning algorithm, this interpretation suggests potential extensions.
In particular, the downstream evaluator need not remain conventional CART. Future methods could grow the rollout trees using look-ahead or smart look-ahead splitting and iteratively update the splitting policy using the resulting values. Because recursive full look-ahead evaluation would be expensive, a practical strategy could use full look-ahead near the root and smart or conventional splitting at deeper nodes. Such policy-improvement approaches may yield stronger split evaluations while controlling computation.

The framework can also accommodate other outcomes by changing the node loss and subtree criterion. Classification trees could use impurity, deviance, or log-loss criteria \citep{breiman1984}; quantile trees could use check loss \citep{chaudhuri2002}; and survival trees could use censoring-aware prediction criteria \citep{leblanc1992}. Extensions to count outcomes, treatment-effect estimation \citep{athey2016}, and model-based recursive partitioning \citep{zeileis2008} are also possible. Important remaining questions include computational scalability, overfitting of the auxiliary value model, suitable pruning methods, out-of-sample guarantees, and the choice of rollout depth.

\clearpage
\section{Supplementary Tables and Figures}

% n = 500
\begin{table}[H]
\centering
\caption{Summary of the simulation results across the four settings for $n=500$. Entries are based on 400 simulation repetitions. Conventional: conventional greedy method; Full LA: full look-ahead method; Smart LA 1: the first smart look-ahead method with the first set of node level features; Smart LA 2: the second smart look-ahead method with the second set of node level features (Training MSE: mean squared prediction error in the training set; Test MSE: mean squared prediction error in the test set; No. of leaves: mean number of terminal leaves of constructed CARTs; RF $R^2:$ the proportion of variation explained by $\hat{G}(\cdot)$ in predicting the value of a candidate split).}
\label{tab:simulation-results-n500}
\small
\setlength{\tabcolsep}{5pt}
\resizebox{\textwidth}{!}{%
\begin{tabular}{llcccc}
\hline
Setting & Method & Training MSE & Test MSE & No. of leaves & RF $R^2$ \\
\hline
1 & Conventional            & 0.735 & 0.926 & 14.030 & -- \\
& Conventional + pruning    & 0.823 & 0.892 &  5.558 & -- \\
& Full LA                   & 0.512 & \textbf{0.662} & 16.188 & -- \\
& Full LA + pruning         & 0.520 & 0.668 & 15.270 & -- \\
& Smart LA1                 & 0.692 & 0.859 & 15.343 & 0.798 \\
& Smart LA1 + pruning       & 0.780 & 0.844 &  6.203 & 0.798 \\
& Smart LA2                 & 0.551 & 0.697 & 15.333 & 0.948 \\
& Smart LA2 + pruning       & 0.580 & 0.705 & 12.540 & 0.948 \\
\hline
2 & Conventional            & 0.574 & 0.731 & 14.133 & -- \\
& Conventional + pruning    & 0.651 & 0.698 &  4.988 & -- \\
& Full LA                   & 0.331 & 0.415 & 16.130 & -- \\
& Full LA + pruning         & 0.335 & 0.415 & 15.475 & -- \\
& Smart LA1                 & 0.473 & 0.590 & 15.355 & 0.769 \\
& Smart LA1 + pruning       & 0.519 & 0.571 &  7.758 & 0.769 \\
& Smart LA2                 & 0.341 & \textbf{0.414} & 15.338 & 0.938 \\
& Smart LA2 + pruning       & 0.350 & 0.415 & 14.083 & 0.938 \\
\hline
3 & Conventional            & 0.406 & 0.526 & 14.160 & -- \\
& Conventional + pruning    & 0.464 & 0.497 &  3.950 & -- \\
& Full LA                   & 0.231 & 0.317 & 16.380 & -- \\
& Full LA + pruning         & 0.259 & \textbf{0.301} &  9.025 & -- \\
& Smart LA1                 & 0.270 & 0.342 & 15.813 & 0.739 \\
& Smart LA1 + pruning       & 0.302 & 0.328 &  6.423 & 0.739 \\
& Smart LA2                 & 0.245 & 0.326 & 15.950 & 0.921 \\
& Smart LA2 + pruning       & 0.284 & 0.305 &  5.348 & 0.921 \\
\hline
4 & Conventional            & 0.224 & 0.296 & 14.580 & -- \\
& Conventional + pruning    & 0.260 & 0.283 &  3.320 & -- \\
& Full LA                   & 0.209 & 0.313 & 16.540 & -- \\
& Full LA + pruning         & 0.265 & 0.286 &  3.240 & -- \\
& Smart LA1                 & 0.228 & 0.296 & 15.138 & 0.856 \\
& Smart LA1 + pruning       & 0.268 & \textbf{0.282} &  2.578 & 0.856 \\
& Smart LA2                 & 0.223 & 0.301 & 14.928 & 0.957 \\
& Smart LA2 + pruning       & 0.271 & 0.283 &  2.338 & 0.957 \\
\hline
\end{tabular}
}
\end{table}

\clearpage

% n=2000, Andrew's version
\begin{table}[htbp]
\centering
\caption{Summary of the simulation results across the four settings for $n=2000$. Entries are based on 400 simulation repetitions. Conventional: conventional greedy method; Full LA: full look-ahead method; Smart LA 1: the first smart look-ahead method with the first set of node level features; Smart LA 2: the second smart look-ahead method with the second set of node level features (Training MSE: mean squared prediction error in the training set; Test MSE: mean squared prediction error in the test set; No. of leaves: mean number of terminal leaves of constructed CARTs; RF $R^2:$ the proportion of variation explained by $\hat{G}(\cdot)$ in predicting the value of a candidate split).}
\label{tab:simulation-results-n2000-official}
\small
\setlength{\tabcolsep}{5pt}
\resizebox{\textwidth}{!}{%
\begin{tabular}{llcccc}
\hline
Setting & Method & Training MSE & Test MSE & No. of leaves & RF $R^2$ \\
\hline
1 & Conventional            & 0.709 & 0.808 & 30.075 & -- \\
& Conventional + pruning    & 0.756 & 0.788 & 12.258 & -- \\
& Full LA                   & 0.378 & \textbf{0.468} & 56.585 & -- \\
& Full LA + pruning         & 0.383 & 0.471 & 53.885 & -- \\
& Smart LA1                 & 0.665 & 0.744 & 34.473 & 0.757 \\
& Smart LA1 + pruning       & 0.706 & 0.741 & 16.708 & 0.757 \\
& Smart LA2                 & 0.468 & 0.546 & 48.323 & 0.942 \\
& Smart LA2 + pruning       & 0.488 & 0.550 & 36.863 & 0.942 \\
\hline
2 & Conventional            & 0.567 & 0.665 & 34.633 & -- \\
& Conventional + pruning    & 0.612 & 0.645 & 13.775 & -- \\
& Full LA                   & 0.263 & 0.325 & 49.158 & -- \\
& Full LA + pruning         & 0.276 & \textbf{0.317} & 36.293 & -- \\
& Smart LA1                 & 0.411 & 0.462 & 37.550 & 0.702 \\
& Smart LA1 + pruning       & 0.431 & 0.457 & 21.845 & 0.702 \\
& Smart LA2                 & 0.289 & 0.351 & 50.098 & 0.924 \\
& Smart LA2 + pruning       & 0.316 & 0.336 & 20.775 & 0.924 \\
\hline
3 & Conventional            & 0.420 & 0.504 & 36.678 & -- \\
& Conventional + pruning    & 0.464 & 0.483 & 9.258 & -- \\
& Full LA                   & 0.228 & 0.302 & 49.308 & -- \\
& Full LA + pruning         & 0.262 & \textbf{0.284} & 18.340 & -- \\
& Smart LA1                 & 0.262 & 0.302 & 37.153 & 0.651 \\
& Smart LA1 + pruning       & 0.286 & 0.295 & 8.675 & 0.651 \\
& Smart LA2                 & 0.256 & 0.300 & 38.133 & 0.892 \\
& Smart LA2 + pruning       & 0.283 & 0.292 & 9.208 & 0.892 \\
\hline
4 & Conventional            & 0.230 & 0.284 & 40.098 & -- \\
& Conventional + pruning    & 0.259 & \textbf{0.275} & 7.113 & -- \\
& Full LA                   & 0.214 & 0.303 & 52.578 & -- \\
& Full LA + pruning         & 0.269 & 0.277 & 5.335 & -- \\
& Smart LA1                 & 0.234 & 0.284 & 40.013 & 0.863 \\
& Smart LA1 + pruning       & 0.263 & 0.275 & 6.428 & 0.863 \\
& Smart LA2                 & 0.229 & 0.289 & 44.890 & 0.958 \\
& Smart LA2 + pruning       & 0.265 & 0.276 & 6.665 & 0.958 \\
\hline
\end{tabular}
}
\end{table}

\clearpage
% Tree Diagrams for Parkinsons Telemonitoring
\begin{figure}[htbp]
\centering
\setlength{\abovecaptionskip}{2pt}
\makebox[\textwidth][c]{\includegraphicsorplaceholder[width=1.2\textwidth]{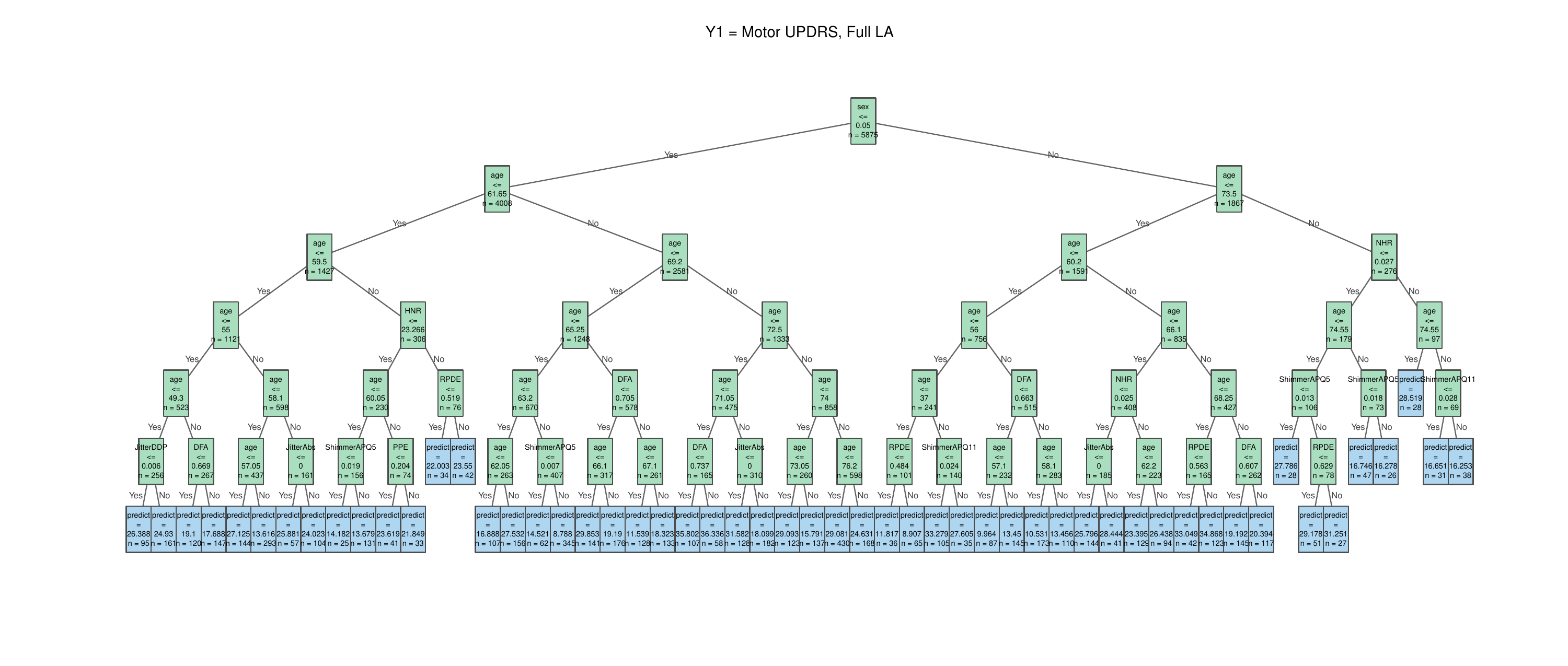}}
{
\caption{Final regression tree from the full look-ahead method for motor UPDRS using all observations without pruning.}
\label{fig:parkinsons-y1-full-la-tree}
}
\end{figure}

\begin{figure}[H]
\centering
\setlength{\abovecaptionskip}{2pt}
\makebox[\textwidth][c]{\includegraphicsorplaceholder[width=1.2\textwidth]{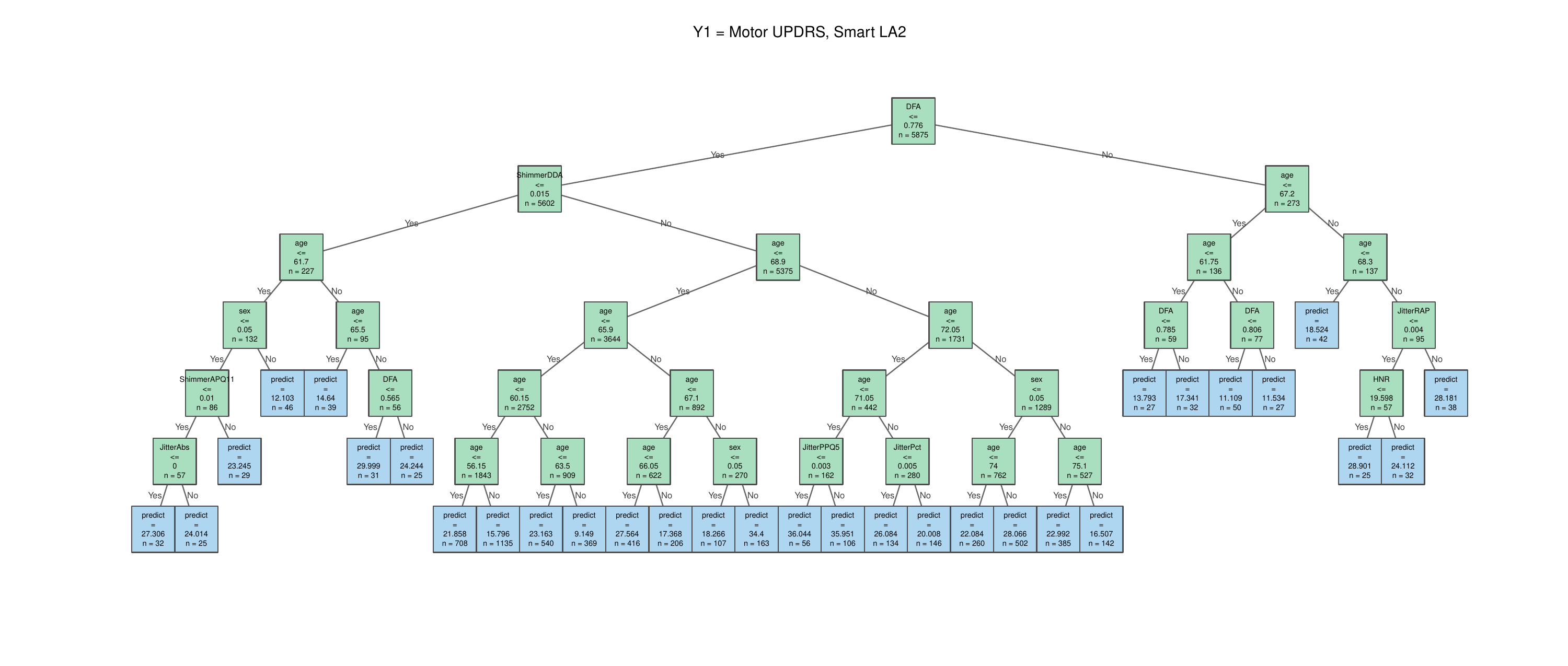}}
{
\caption{Final regression tree from the second smart look-ahead method for motor UPDRS using all observations without pruning.}
\label{fig:parkinsons-y1-smart-la2-tree}
}
\end{figure}
\clearpage

\begin{figure}[H]
\centering
\setlength{\abovecaptionskip}{2pt}
\makebox[\textwidth][c]{\includegraphicsorplaceholder[width=1.2\textwidth]{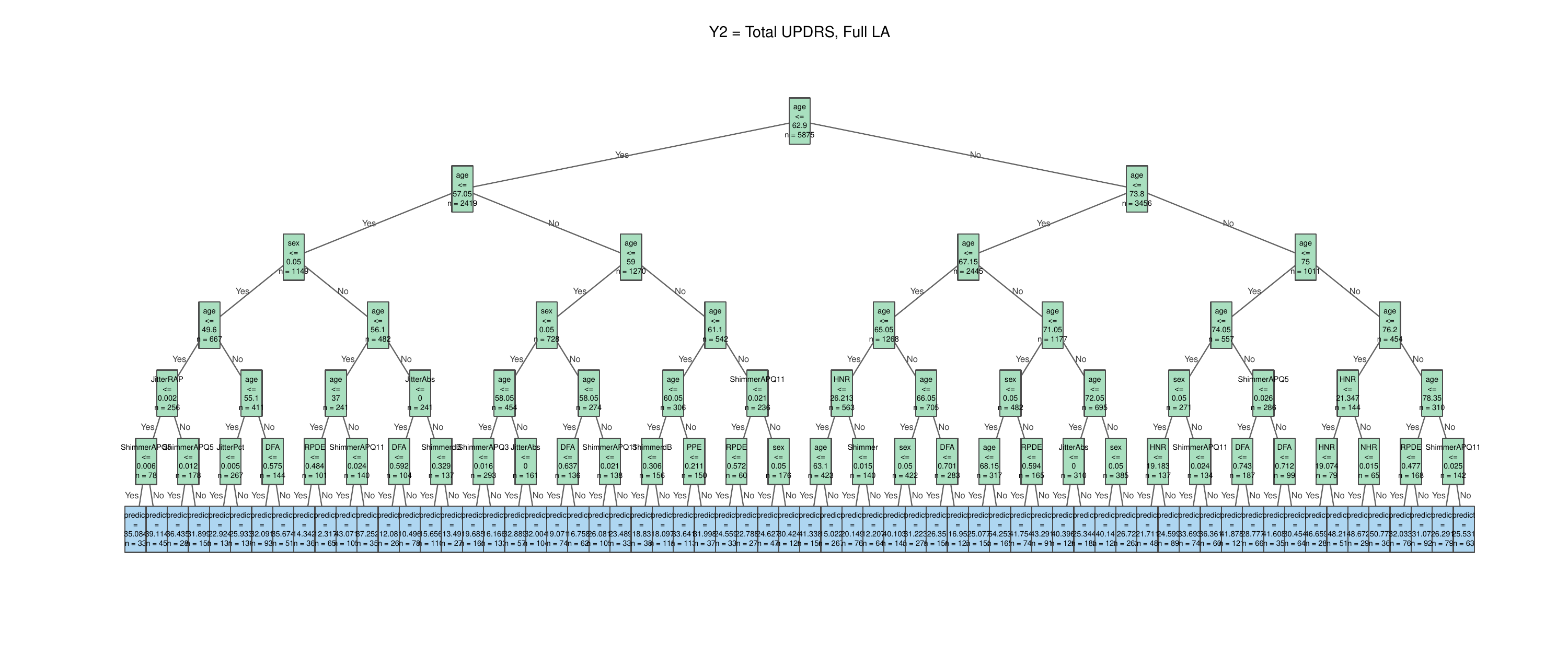}}
{
\caption{Final regression tree from the full look-ahead method for total UPDRS using all observations without pruning.}
\label{fig:parkinsons-y2-full-la-tree}
}
\end{figure}

\begin{figure}[H]
\centering
\setlength{\abovecaptionskip}{2pt}
\makebox[\textwidth][c]{\includegraphicsorplaceholder[width=1.2\textwidth]{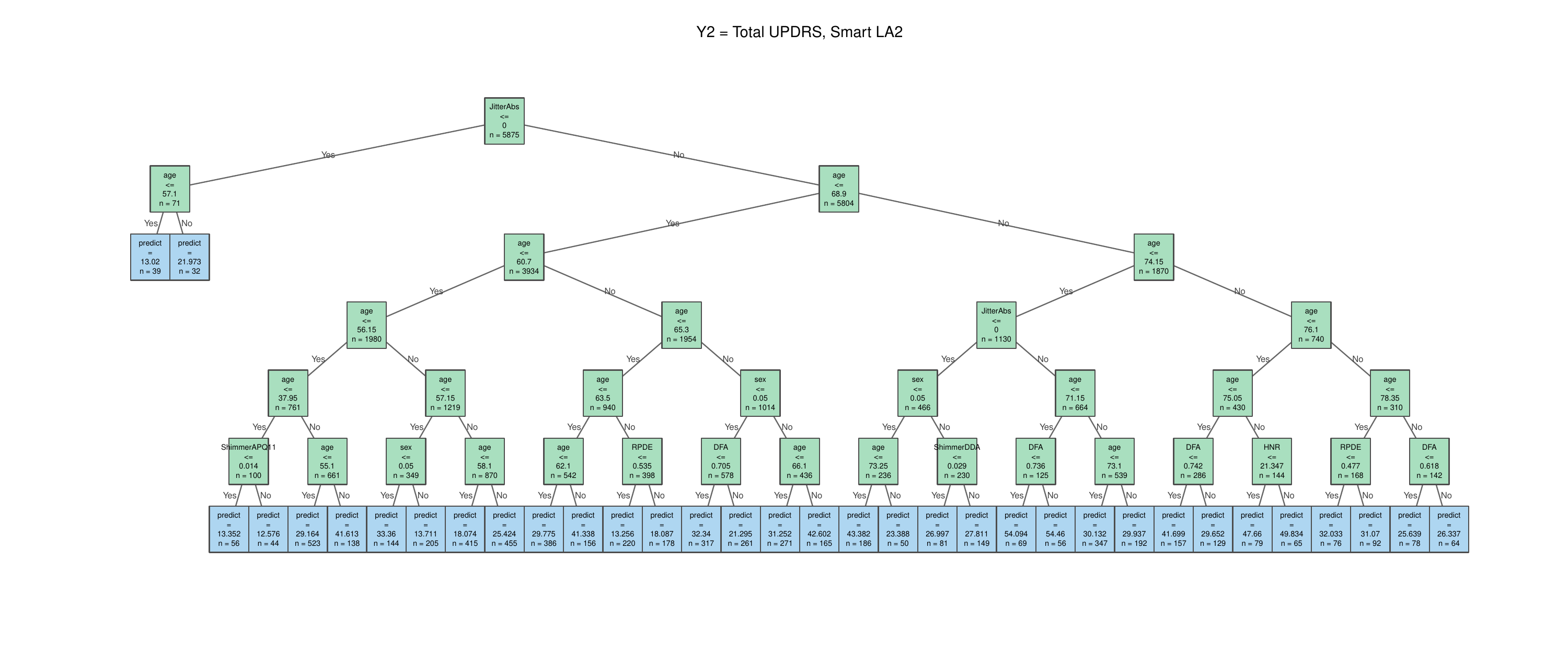}}
{
\caption{Final regression tree from the second smart look-ahead method for total UPDRS using all observations without pruning.}
\label{fig:parkinsons-y2-smart-la2-tree}
}
\end{figure}

\clearpage
% Tree Diagrams for Concrete Compressive Strength
\begin{figure}[htbp]
\centering
\setlength{\abovecaptionskip}{2pt}
\makebox[\textwidth][c]{\includegraphicsorplaceholder[width=1.4\textwidth]{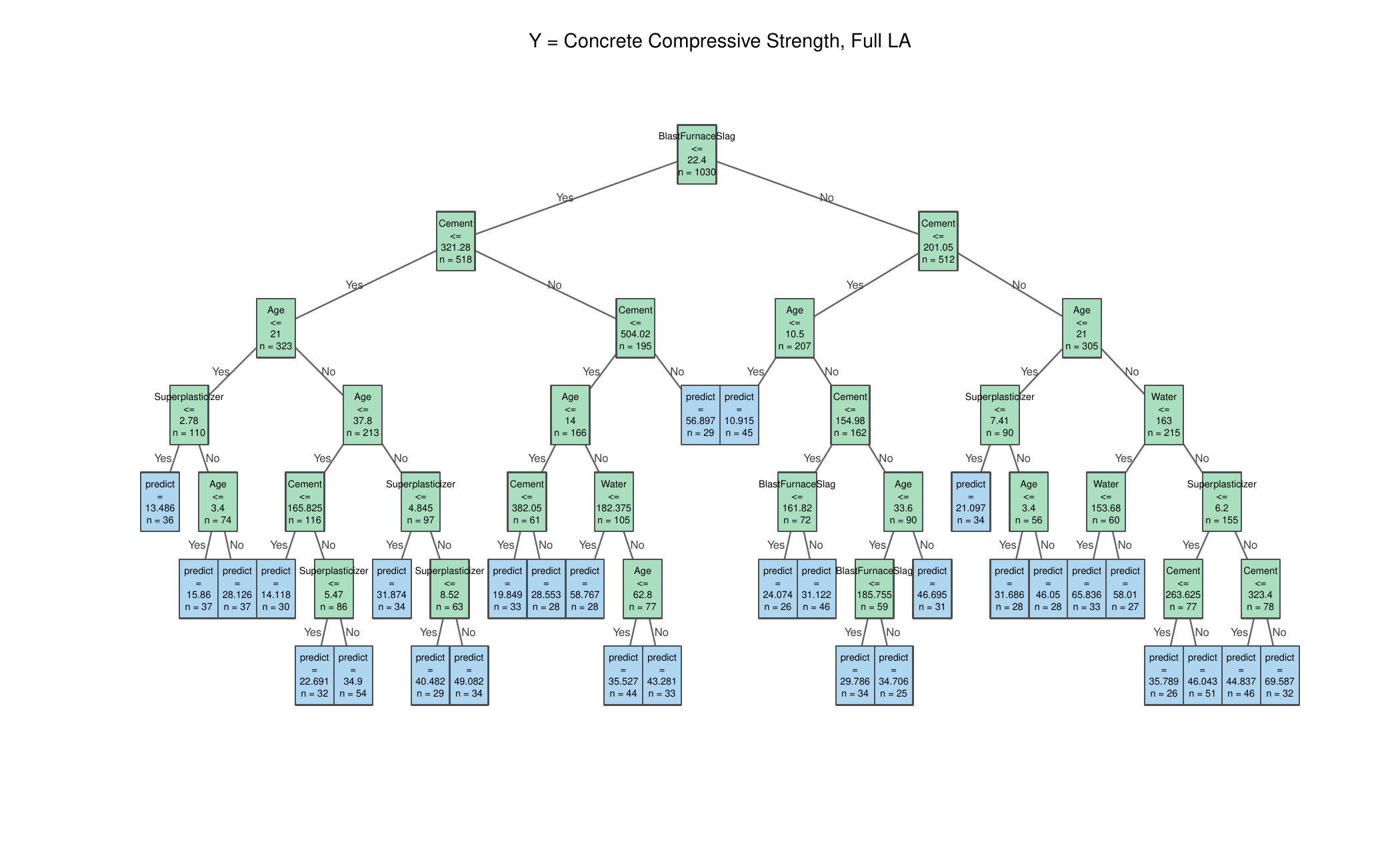}}
{
\caption{Final regression tree from the full look-ahead method for Concrete Compressive Strength using all observations without pruning.}
\label{fig:concrete-full-la-tree}
}
\end{figure}

\clearpage
\begin{figure}[htbp]
\centering
\setlength{\abovecaptionskip}{2pt}
\makebox[\textwidth][c]{\includegraphicsorplaceholder[width=1.4\textwidth]{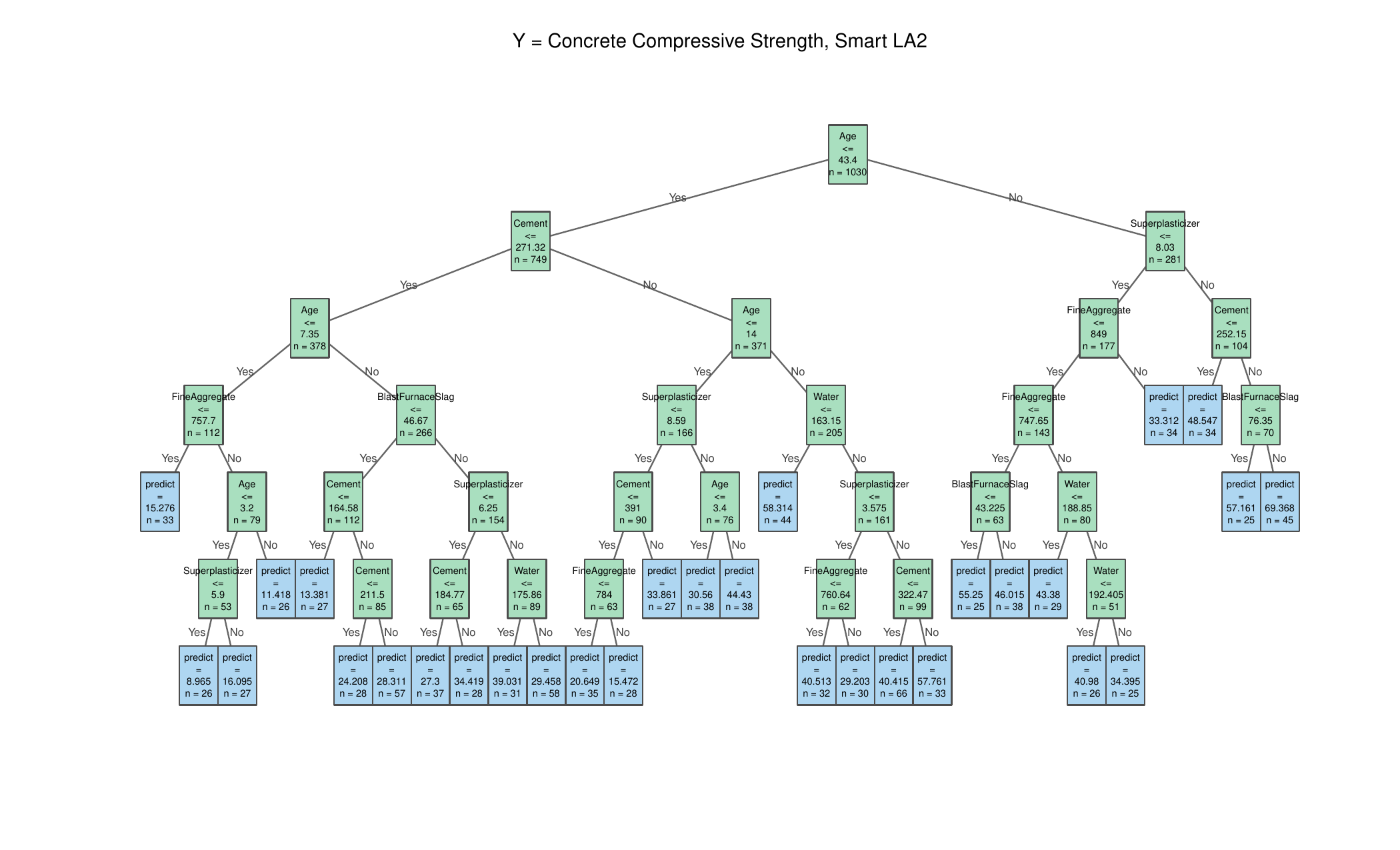}}
{
\caption{Final regression tree from the second smart look-ahead method for Concrete Compressive Strength using all observations without pruning.}
\label{fig:concrete-smart-la2-tree}
}
\end{figure}
\clearpage

\end{document}